\let\emptyset\varnothing
\newcommand{\eat}[1]{}
\newcommand{\algname}[1]{\textsf{\footnotesize #1}}
\newcommand{\salgname}[1]{\textsf{\scriptsize #1}}
\DeclareMathOperator*{\argmin}{arg\,min}
\newcommand{\topk}{top-\emph{k}}
\theoremstyle{definition}
\newtheorem{definition}{Definition}
\newtheorem{theorem}{Theorem}
\newtheorem{example}{Example}
\newtheorem{property}{Property}
\newtheorem{myrule}{Rule}
\newtheorem{strategy}{Strategy}
\newcommand{\obj}[1]{\textsf{\footnotesize #1}}
\newcommand{\paircomp}[3]{\obj{#1}\ensuremath{?_{#3}}\obj{#2}}
\newcommand{\compxyc}{\paircomp{x}{y}{c}}
\newcommand{\domO}[2]{\obj{#1}\ensuremath{\succ}\obj{#2}}
\newcommand{\indiffO}[2]{\obj{#1}\ensuremath{\sim}\obj{#2}}
\newcommand{\domC}[3]{\obj{#1}\ensuremath{\succ_{#3}}\obj{#2}}
\newcommand{\worseC}[3]{\obj{#1}\ensuremath{\prec_{#3}}\obj{#2}}
\newcommand{\indiffC}[3]{\obj{#1}\ensuremath{\sim_{#3}}\obj{#2}}
\newcommand{\rlt}{\ensuremath{rlt}}
\title{\huge Crowdsourcing Pareto-Optimal Object Finding\\ by Pairwise Comparisons\vspace{-3mm}}
\author{%
{Abolfazl Asudeh, Gensheng Zhang, Naeemul Hassan, Chengkai Li, Gergely V. Zaruba}\\
\fontsize{10}{10}\selectfont\itshape
Department of Computer Science and Engineering, The University of Texas at Arlington\\
}
\begin{document}
\maketitle

\thispagestyle{plain}
\pagestyle{plain}

\begin{abstract}
This is the first study on crowdsourcing Pareto-optimal object finding, which has applications in public opinion collection, group decision making, and information exploration. Departing from prior studies on crowdsourcing skyline and ranking queries, it considers the case where objects do not have explicit attributes and preference relations on objects are strict partial orders. The partial orders are derived by aggregating crowdsourcers' responses to pairwise comparison questions. The goal is to find all Pareto-optimal objects by the fewest possible questions. It employs an iterative question-selection framework. Guided by the principle of eagerly identifying non-Pareto optimal objects, the framework only chooses candidate questions which must satisfy three conditions. This design is both sufficient and efficient, as it is proven to find a short terminal question sequence. The framework is further steered by two ideas---macro-ordering and micro-ordering. By different micro-ordering heuristics, the framework is instantiated into several algorithms with varying power in pruning questions. Experiment results using both real crowdsourcing marketplace and simulations exhibited not only orders of magnitude reductions in questions when compared with a brute-force approach, but also close-to-optimal performance from the most efficient instantiation.
\end{abstract}

\section{Introduction}\vspace{-1mm}

The growth of user engagement and functionality in crowdsourcing platforms
has made computationally challenging tasks unprecedentedly convenient.  The
subject of our study is one such task---crowdsourcing \emph{Pareto-optimal
object finding}.  Consider a set of objects $O$ and a set of criteria $C$
for comparing the objects.  An object $\obj{x}$$\in$$O$ is
\emph{Pareto-optimal} if and only if \obj{x} is not dominated by any other
object, i.e., $\nexists$$\obj{y}$$\in$$O$ such that \domO{y}{x}.  An object
\obj{y} dominates \obj{x} (denoted \domO{y}{x}) if and only if \obj{x} is
not better than \obj{y} by any criterion and \obj{y} is better than \obj{x}
by at least one criterion, i.e., $\forall c$$\in$$C : \obj{x}$$\nsucc_c$$\obj{y}$
and $\exists c$$\in$$C : \domC{y}{x}{c}$.  If \obj{x} and \obj{y} do not
dominate each other (i.e., $\obj{x}$$\nsucc$$\obj{y}$ and $\obj{y}$$\nsucc$$\obj{x}$),
we denote it by \indiffO{x}{y}.
The \emph{preference (better-than) relation} $P_c$ (also denoted $\succ_c$)
for each $c$$\in$$C$ is a binary relation subsumed by $O$$\times$$O$, in
which a tuple $(\obj{x},\obj{y})$$\in$$P_c$ (also denoted \domC{x}{y}{c}) is
interpreted as ``\obj{x} is better than (preferred over) \obj{y} with regard
to criterion $c$''.  Hence, if $(\obj{x},\obj{y})$$\notin$$P_c$ (also denoted
\obj{x}$\nsucc_c$\obj{y}), \obj{x} is not better than \obj{y} by criterion
$c$.  We say \obj{x} and \obj{y} are \emph{indifferent} regarding $c$ (denoted
\indiffC{x}{y}{c}), if $(\obj{x},\obj{y})$$\notin$$P_c \wedge (\obj{y},\obj{x})$$\notin$$P_c$, i.e.,
``\obj{x} and \obj{y} are equally good or incomparable with regard to $c$.''
We consider the setting where each $P_c$ is a \emph{strict partial order} as opposed to
a bucket order~\cite{bucketorder-fagin-pods04} or a total order,
i.e., $P_c$ is irreflexive ($\forall \obj{x} : (\obj{x},\obj{x}) \notin P_c$) and
transitive ($\forall \obj{x},\obj{y} : (\obj{x},\obj{y})$$\in$$P_c$$\wedge$$(\obj{y},\obj{z})$$\in$$P_c$$\Rightarrow$$(\obj{x},\obj{z})$$\in$$P_c$), which together
imply asymmetry ($\forall \obj{x},\obj{y} : (\obj{x},\obj{y})$$\in$$P_c \Rightarrow (\obj{y},\obj{x})$$\notin$$P_c$).

\begin{figure}[t]
    \begin{subfigure}[b]{\linewidth}
    \centering
    \vspace{-1mm}
        \includegraphics[width=44mm]{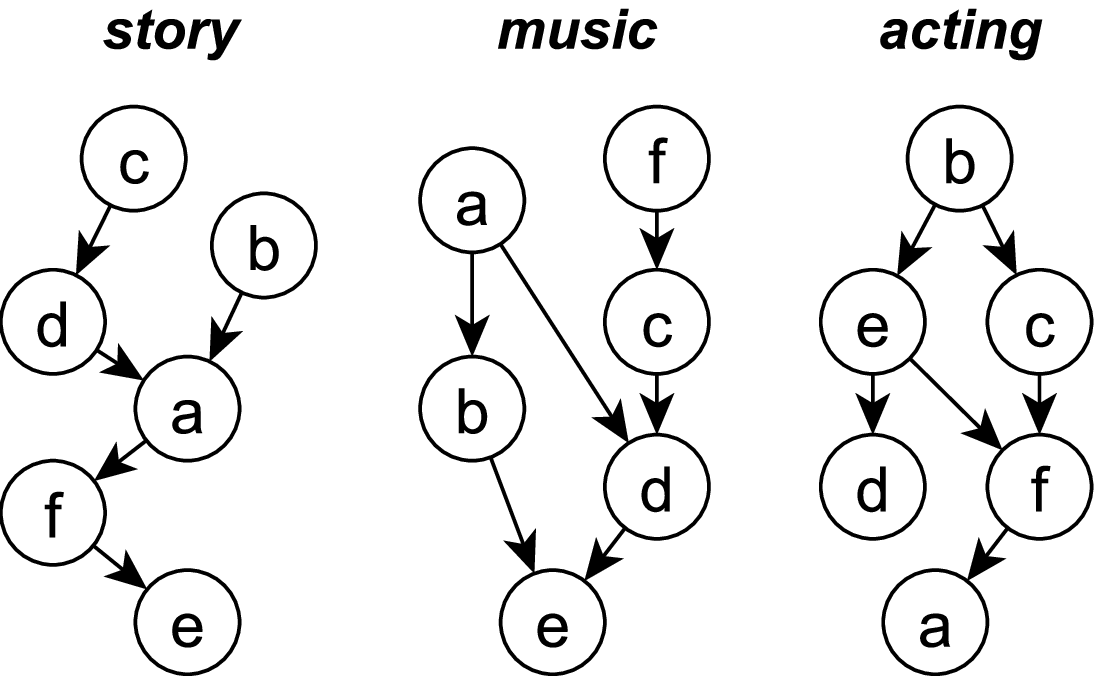} \vspace{-1mm}
    \caption{\small{Preference relations (i.e., strict partial orders) on three criteria.}}
    \vspace{2mm}
    \label{fig:poset-movie}
    \end{subfigure}

    \begin{subfigure}[b]{\linewidth}
        \begin{center}
        \begin{scriptsize}
        \begin{tabular}{|c|c|c|c|c|}
        \hline
        & \multicolumn{3}{c|}{ANSWER} & \\ \cline{2-4}
        QUESTION & $\succ$ &$\sim$&$\prec$ & OUTCOME \\ \hline
        \paircomp{a}{b}{s}&1&0&4&\domC{b}{a}{s}\\ \hline
        \paircomp{a}{c}{s}&0&0&5&\domC{c}{a}{s}\\ \hline
        \paircomp{a}{d}{s}&0&2&3&\domC{d}{a}{s}\\ \hline
        \paircomp{a}{e}{s}&4&0&1&\domC{a}{e}{s}\\ \hline
        \paircomp{a}{f}{s}&3&1&1&\domC{a}{f}{s}\\ \hline
        \paircomp{b}{c}{s}&1&2&2&\indiffC{b}{c}{s}\\ \hline
        \paircomp{b}{d}{s}&1&3&1&\indiffC{b}{d}{s}\\ \hline
        \paircomp{b}{e}{s}&5&0&0&\domC{b}{e}{s}\\ \hline
        \paircomp{b}{f}{s}&4&1&0&\domC{b}{f}{s}\\ \hline
        \paircomp{c}{d}{s}&3&2&0&\domC{c}{d}{s}\\ \hline
        \paircomp{c}{e}{s}&4&0&1&\domC{c}{e}{s}\\ \hline
        \paircomp{c}{f}{s}&3&1&1&\domC{c}{f}{s}\\ \hline
        \paircomp{d}{e}{s}&3&0&2&\domC{d}{e}{s}\\ \hline
        \paircomp{d}{f}{s}&3&2&0&\domC{d}{f}{s}\\ \hline
        \paircomp{e}{f}{s}&1&1&3&\domC{f}{e}{s}\\ \hline
        \end{tabular}
        \end{scriptsize}
        \end{center}
    \vspace{-3mm}
    \caption{\small{Deriving the preference relation for criterion \emph{story} by pairwise comparisons. Each comparison is performed by $5$ workers. $\theta = 60\%$.}}
    \label{fig:crowd-pref-movie}
    \end{subfigure}
\vspace{-4mm}
\caption{\small{Finding Pareto-optimal movies by \emph{story}, \emph{music} and \emph{acting}.}}
\label{fig:movie-example} 
\end{figure}

Pareto-optimal object finding lends itself to applications in several areas,
including public opinion collection, group decision making, and information
exploration, exemplified by the following motivating examples.

\begin{example}[Collecting Public Opinion and Group Decision Making]
\label{ex:movie}

Consider a set of movies $O$$=$$\{\obj{a,b,c,d,e,f}\}$ and a set of criteria
$C$$=$$\{$\emph{story, music, acting}$\}$ (denoted by $s$, $m$, $a$ in the
ensuing discussion).
Fig.\ref{fig:poset-movie} shows the individual preference relations
(i.e., strict partial orders), one per criterion.  Each strict partial
order is graphically represented as a directed acyclic graph (DAG),
more specifically a Hasse diagram.
The existence of a simple path from \obj{x} to \obj{y} in the DAG means
\obj{x} is better than (preferred to) \obj{y} by the corresponding criterion.
For example, $(\obj{a},\obj{e})$$\in$$P_m$
(\domC{a}{e}{m}), i.e., \obj{a} is better than \obj{e} by \emph{music}.
$(\obj{b},\obj{d})$$\notin$$P_s$ and $(\obj{d},\obj{b})$$\notin$$P_s$; hence \indiffC{b}{d}{s}.
The partial orders define the dominance relation between objects.
For instance, movie \obj{c} dominates \obj{d} (\domO{c}{d}), because \obj{c} is preferred
than \obj{d} on \emph{story} and \emph{music} and they are indifferent on
\emph{acting}, i.e., \domC{c}{d}{s}, \domC{c}{d}{m}, and \indiffC{c}{d}{a};
\obj{a} and \obj{b} do not dominate each other (\indiffO{a}{b}),
since \domC{b}{a}{s}, \domC{a}{b}{m} and \domC{b}{a}{a}.
Based on the three partial orders, movie \obj{b} is the only Pareto-optimal
object, since no other objects dominate it and every other object is dominated
by some object.

Note that tasks such as the above one may be used in both understanding the public's
preference (i.e., the preference relations are collected from a large,
anonymous crowd) and making decisions for a target group (i.e., the preference
relations are from a small group of people). \qed
\end{example}

\begin{example}[Information Exploration]
\label{ex:pic}

Consider a photography enthusiast, \emph{Amy}, who is drown in a large
number of photos she has taken and wants to select a subset of the better ones.
She resorts to crowdsourcing for the task, as it has been
exploited by many for similar tasks such as photo tagging, location/face
identification, sorting photos by (guessed) date, and so on.
Particularly, she would like to choose Pareto-optimal photos with regard to \emph{color},
\emph{sharpness} and \emph{landscape}. \qed \vspace{-1mm}
\end{example}

By definition, the crux of finding Pareto-optimal objects lies in obtaining the
preference relations, i.e., the strict partial orders on individual criteria.
Through crowdsourcing, the preference relations are derived by aggregating
the crowd's responses to \emph{pairwise comparison} tasks.  Each such
comparison between objects \obj{x} and \obj{y} by criterion $c$ is a question,
denoted \compxyc, which has three possible outcomes---\domC{x}{y}{c}, \domC{y}{x}{c}, and \indiffC{x}{y}{c},
based on the crowd's answers.
An example is as follows.\vspace{-2mm}

\begin{figure}
    \centering
    \vspace{-1mm}
    \includegraphics[width=75mm]{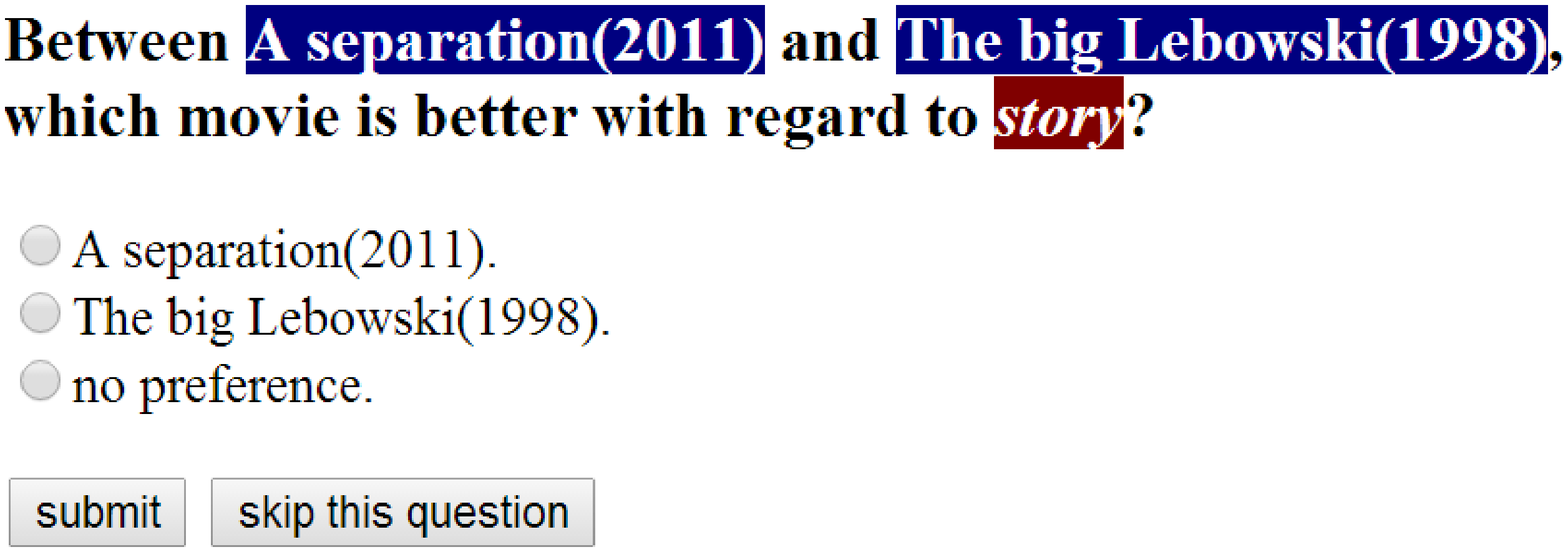}\vspace{-1mm}
    \caption{\small{A question that asks to compare two movies by \emph{story}.}}\vspace{-1mm}
    \label{fig:movieform}
\end{figure}

\begin{example}[Deriving Preference Relations from Pairwise Comparisons by the Crowd]
\label{ex:pairwise}
Fig.\ref{fig:crowd-pref-movie} shows the hypothetical results of all $15$
pairwise comparisons between the $6$ movies in Example~\ref{ex:movie}, by
criterion $s$$=$\emph{story}.
The outcomes of all comparisons form the crowd's preference relation on
\emph{story} (the leftmost DAG in Fig.\ref{fig:poset-movie}).
Fig.\ref{fig:movieform} is the screenshot of a question form designed
for one such comparison.
A crowdsourcer, when facing this question, would make a choice among
the three possible answers or skip a question if they do not have enough
confidence or knowledge to answer it.
Fig.\ref{fig:crowd-pref-movie} shows how many crowdsourcers
have selected each answer.  For instance, for question \paircomp{a}{f}{s},
three people preferred movie \obj{a}, one person preferred $f$, and one person
is indifferent.  By aggregating these answers, it is derived that \obj{a} is
better than \obj{f} with regard to \emph{story}, since $60\%$
of the crowdsourcers who responded to the question chose this answer.
For question \paircomp{b}{c}{s}, the result is \indiffC{b}{c}{s}, since neither
\domC{b}{c}{s} nor \worseC{b}{c}{s} received enough votes.
(Assuming a threshold $\theta$$=$$60\%$, i.e., either \domC{b}{c}{s} or
\worseC{b}{c}{s} should have at least $60\%$ of votes, in order to not
declare \indiffC{b}{c}{s}.)
\qed  \vspace{-1mm}
\end{example}

To the best of our knowledge, this paper is the first work on crowdsourcing
Pareto-optimal object finding.  The definition of Pareto-optimal objects follows
the concept of \emph{Pareto composition} of preference relations in~\cite{chomicki2003preference}.
It also resembles the definition of \emph{skyline} objects on totally-ordered attribute
domains (pioneered by~\cite{borzsony2001skyline}) and partially-ordered domains~\cite{chan2005stratified,sacharidis2009topologically,sarkas2008categorical,skylinepartialorder-zhang-vldb10}.
However, except for~\cite{lofi2013skyline}, previous studies on preference and skyline
queries do not use the crowd; they focus on query processing on \emph{existing} data.
On the contrary, we examine how to ask the crowd as few questions as
possible in obtaining sufficient data for determining Pareto-optimal
objects.  Furthermore, our work differs from preference and skyline queries
(including~\cite{lofi2013skyline}) in several radical ways:

\begin{list}{$\bullet$}
{ \setlength{\leftmargin}{0.5em} \setlength{\itemsep}{0pt} \setlength{\parsep}{0pt}}
\item The preference relation for a criterion is \emph{not} governed by
explicit scores or values on object attributes (e.g., sizes of houses, prices
of hotels), while preference and skyline queries on both totally- and partially-ordered domains
assumed explicit attribute representation.  For many comparison criteria, it is difficult to
model objects by explicit attributes, not to mention asking people to provide such values or
scores; people's preferences are rather based on complex, subtle personal perceptions, as
demonstrated in Examples~\ref{ex:movie} and~\ref{ex:pic}.

\item Due to the above reason, we request crowdsourcers to perform pairwise
comparisons instead of directly providing attribute values or scores.
On the contrary, \cite{lofi2013skyline} uses the crowd to obtain missing
attribute values.  Pairwise comparison is extensively studied in
social choice and welfare, preferences, and voting.
It is known that people are more comfortable and confident with
comparing objects than directly scoring them, since it
is easier, faster, and less error-prone~\cite{thurstone1927law}.

\item The crowd's preference relations are modeled as strict partial orders,
as opposed to bucket orders or full orders.
This is not only a direct effect of using pairwise comparisons instead of
numeric scores or explicit attribute values, but also a reflection of
the psychological nature of human's preferences~\cite{kiessling2002foundations,chomicki2003preference},
since it is not always natural to enforce a total or bucket order.
Most studies on skyline queries assume total/bucket orders,
except for~\cite{chan2005stratified,sacharidis2009topologically,sarkas2008categorical,skylinepartialorder-zhang-vldb10} which consider partial orders.
\end{list}

Our objective is to find all Pareto-optimal objects with as few questions
as possible.  A brute-force approach
will obtain the complete preference relations via pairwise
comparisons on all object pairs by every criterion.  However, without
such exhaustive comparisons, the incomplete knowledge collected from a
small set of questions may suffice in discerning all Pareto-optimal
objects.  Toward this end, it may appear that we can take advantage of
the transitivity of object dominance---a cost-saving property often
exploited in skyline query algorithms (e.g., \cite{borzsony2001skyline})
to exclude dominated objects from participating in any future comparison once
they are detected.
But, we shall prove that object dominance in our case is \emph{not} transitive
(Property~\ref{property:intransitivity}), due to the lack of explicit attribute representation.
Hence, the aforementioned cost-saving technique is inapplicable.

Aiming at Pareto-optimal object finding by a short sequence of questions,
we introduce a general, iterative algorithm
framework (Sec.\ref{sec:framework}).
Each iteration goes through four steps---\emph{question
selection}, \emph{outcome derivation}, \emph{contradiction resolution},
and \emph{termination test}.  In the $i$-th iteration, a question
$q_i$$=$$\compxyc$ is selected and its outcome is determined based
on crowdsourcers' answers.
On unusual occasions, if the outcome presents a contradiction to the obtained
outcomes of other questions, it is changed to the closest outcome such that the
contradiction is resolved.  Based on the transitive closure of the outcomes to
the questions so far, the objects $O$ are partitioned into three
sets---$O_\surd$ (objects that must be Pareto-optimal),
$O_\times$ (objects that must be non-Pareto optimal),
and $O_?$ (objects whose Pareto-optimality cannot be fully discerned by the
incomplete knowledge so far).
When $O_?$ becomes empty, $O_\surd$ contains all Pareto-optimal objects and the
algorithm terminates.  The question sequence so far is thus a \emph{terminal sequence}.

\begin{table*}
\begin{center}
\small
\begin{tabular}{|c||p{3.8cm}|p{3cm}|p{1.5cm}|p{3cm}|p{2.4cm}|}
\hline
 &\textbf{Task} & \textbf{Question type} & \textbf{Multiple attributes} & \textbf{Order among objects (on each attribute)} & \textbf{Explicit attribute representation} \\ \hline
 \cite{pairwiseranking-chen-wism13} & full ranking & pairwise comparison & no & bucket/total order & no \\ \hline
 \cite{humantopk} & \topk\ ranking & rank subsets of objects & no & bucket/total order & no \\ \hline
 \cite{davidson2013topkgroupby} & \topk\ ranking and grouping & pairwise comparison & no & bucket/total order & no \\ \hline
 \cite{lofi2013skyline} & skyline queries & missing value inquiry & yes & bucket/total order & yes \\ \hline
 This work & Pareto-optimal object finding & pairwise comparison & yes & strict partial order & no \\ \hline
\end{tabular}
\end{center}
\vspace{-2mm}
\caption{\small{Related work comparison.}}
\label{table:related}
\vspace{-5mm}
\end{table*}

There are a vast number of terminal sequences.
Our goal is to find one that is as short as possible.
We observe that, for a non-Pareto optimal object,
knowing that it is dominated by at least one object is sufficient, and we
do not need to find all its dominating objects.
It follows that we do not really care about
the dominance relation between non-Pareto optimal objects and we can
skip their comparisons.
Hence, the overriding principle of our question selection strategy is to
identify non-Pareto optimal objects as early as possible.
Guided by this principle, the framework only chooses from
\emph{candidate questions} which must satisfy three
conditions (Sec.\ref{sec:q-select}).
This design is sufficient, as we prove that an empty candidate
question set implies a terminal sequence, and vice versa (Proporty~\ref{property:emptycand}).  The design is also efficient,
as we further prove that, if a question sequence
contains non-candidate questions, there exists a shorter or equally long
sequence with only candidate questions that produces the same $O_\times$,
matching the principle of eagerly finding non-Pareto optimal objects (Theorem~\ref{thm:optimal}).
Moreover, by the aforementioned principle, the framework selects in every iteration
such a candidate question \compxyc\ that \obj{x} is more likely to be dominated by \obj{y}.
The selection is steered by two ideas---\emph{macro-ordering}
and \emph{micro-ordering}.
By using different micro-ordering heuristics,  the framework is instantiated into several algorithms with varying power in
pruning questions (Sec.\ref{sec:alg}).
We also derive a lower bound on the number of questions required for finding all
Pareto-optimal objects (Theorem~\ref{th:lowerbound}).

In summary, this paper makes the following contributions:
\begin{list}{$\bullet$}
{ \setlength{\leftmargin}{0.5em} \setlength{\parsep}{0pt}}
  \item This is the first work on crowdsourcing Pareto-optimal object finding.
  Prior studies on crowdsourcing skyline queries~\cite{lofi2013skyline} assumes explicit attribute representation and uses crowd to obtain missing attribute values.
  We define preference relations purely based on pairwise comparisons and we aim to find all Pareto-optimal objects by as few comparisons as possible.
  \item We propose a general, iterative algorithm framework (Sec.\ref{sec:framework}) which follows the strategy of choosing only candidate questions that must satisfy three conditions.
  We prove important properties that establish the advantage of the strategy (Sec.\ref{sec:q-select}).
  \item We design macro-ordering and micro-ordering heuristics for finding a short terminal question sequence.  Based on the heuristics, the generic framework is instantiated into several algorithms (\algname{RandomQ}, \algname{RandomP}, \algname{FRQ}) with varying efficiency.  We also derive a non-trivial lower bound on the number of required pairwise comparison questions.  (Sec.\ref{sec:alg})
  \item  We carried out experiments by simulations to compare the amount of comparisons required by different instantiations of the framework under varying problem sizes.  We also investigated two case studies by using human judges and real crowdsourcing marketplace.  The results demonstrate the effectiveness of selecting only candidate questions, macro-ordering, and micro-ordering.  When these ideas are stacked together, they use orders of magnitude less comparisons than a brute-force approach.  The results also reveal that \algname{FRQ} is nearly optimal and the lower bound is practically tight, since \algname{FRQ} gets very close to the lower bound. (Sec.\ref{sec:exp})
\end{list}

\section{Related Work} \label{s:related}


This is the first work on crowdsourcing Pareto-optimal object finding.
There are several recent studies on using crowdsourcing to rank objects and answer group-by, \topk\ and skyline queries. \algname{Crowd-BT}~\cite{pairwiseranking-chen-wism13} ranks objects by crowdsourcing pairwise object comparisons.  Polychronopoulos et al.~\cite{humantopk} find \topk\ items in an itemset by asking human workers to rank small subsets of items.  Davidson et al.~\cite{davidson2013topkgroupby} evaluate \topk\ and group-by queries by asking the crowd to answer \emph{type} questions (whether two objects belong to the same group) and \emph{value} questions (ordering two objects).  Lofi et al.~\cite{lofi2013skyline} answer skyline queries over incomplete data by asking the crowd to provide missing attribute values.
Table~\ref{table:related} summarizes the similarities and differences between these studies and our work.  The studies on full and \topk\ ranking~\cite{pairwiseranking-chen-wism13,humantopk,davidson2013topkgroupby} do not consider multiple attributes in modeling objects.  On the contrary, the concepts of skyline~\cite{lofi2013skyline} and Pareto-optimal objects (this paper) are defined in a space of multiple attributes.  \cite{lofi2013skyline} assumes explicit attribute representation.  Therefore, they resort to the crowd for completing missing values, while other studies including our work request the crowd to compare objects.  Our work considers strict partial orders among objects on individual attributes.  Differently, other studies assume a bucket/total order~\cite{pairwiseranking-chen-wism13,humantopk,davidson2013topkgroupby} or multiple bucket/total orders on individual attributes~\cite{lofi2013skyline}.

Besides~\cite{pairwiseranking-chen-wism13}, there were multiple studies on ranking objects by pairwise comparisons,
which date back to decades ago as aggregating the preferences of multiple agents has always been a fundamental problem in social choice and welfare~\cite{arrow1951social}.
The more recent studies can be categorized into three types: \textbf{1)} Approaches such as~\cite{liu-cikm09,rendle-uai09,crowdranking-yi-hcomp13} predict users' object ranking by completing a user-object scoring matrix.  Their predications take into account users' similarities in pairwise comparisons, resembling \emph{collaborative filtering}~\cite{collaborativefiltering-goldberg-cacm92}.  They thus do not consider explicit attribute representation for objects.  \textbf{2)} Approaches such as~\cite{ranknet-burges-icml05,irsvm-cao-sigir06,lambdarank-burges-nips06} infer query-specific (instead of user-specific) ranked results to web search queries.  Following the paradigm of \emph{learning-to-rank}~\cite{learntorank-liu-09}, they rank a query's result documents according to pairwise result comparisons of other queries.  The documents are modeled by explicit ranking features.   \textbf{3)} Approaches such as~\cite{braverman-soda08,jamieson-nips11,ailon-nips11,ailon-jmlr12,negahban-nips12} are similar to~\cite{pairwiseranking-chen-wism13} as they use pairwise comparisons to infer a single ranked list that is neither user-specific nor query-specific.  Among them, \cite{jamieson-nips11} is special in that it also applies learning-to-rank and requires explicit feature representation.  Different from our work, none of these studies is about Pareto-optimal objects, since they all assume a bucket/total order among objects; those using learning-to-rank require explicit feature representation, while the rest do not consider multiple attributes.
Moreover, except~\cite{jamieson-nips11,ailon-nips11,ailon-jmlr12}, they all assume comparison results are already obtained before their algorithms kick in.
In contrast, we aim at minimizing the pairwise comparison questions to ask in finding Pareto-optimal objects.

\section{General Framework} \label{sec:framework}

By the definition of Pareto-optimal objects, the key to finding such
objects is to obtain the preference relations, i.e., the strict
partial orders on individual criteria.  Toward this end, the most basic
operation is to perform \emph{pairwise comparison}---given a pair of
objects \obj{x} and \obj{y} and a criterion $c$, determine whether one is better
than the other (i.e., $(\obj{x},\obj{y}) \in P_c$ or $(\obj{y},\obj{x}) \in P_c$) or they are
indifferent (i.e., $(\obj{x},\obj{y}) \notin P_c \wedge (\obj{y},\obj{x}) \notin P_c$).

The problem of crowdsourcing Pareto-optimal object finding is thus essentially
crowdsourcing pairwise comparisons.  Each comparison task between
\obj{x} and \obj{y} by criterion $c$ is presented to the crowd as a
question $q$ (denoted \compxyc).  The outcome to the question
(denoted $\rlt(q)$) is aggregated from the crowd's answers.
Given a set of questions, the outcomes thus contain an (incomplete) knowledge
of the crowd's preference relations for various criteria.
Fig.\ref{fig:movieform} illustrates the screenshot of one such question
(comparing two movies by \emph{story}) used in our empirical evaluation.
We note that there are other viable designs of question, e.g., only
allowing the first two choices (\domC{x}{y}{c} and \domC{y}{x}{c}).
Our work is agnostic to the specific question design.

Given $n$ objects and $r$ criteria, a brute-force approach will perform
pairwise comparisons on all object pairs by every criterion, which
leads to $r$$\cdot$$n$$\cdot$$(n$$-$$1)/2$ comparisons.  The
corresponding question outcomes amount to the complete underlying preference
relations.  The quadratic nature
of the brute-force approach renders it wasteful.  The bad news is that, in the worst
case, we cannot do better than it.  To understand
this, consider the scenario where all objects are indifferent by
every criterion.  If any comparison \compxyc\ is skipped, we cannot
determine if \obj{x} and \obj{y} are indifferent or if one dominates another.

In practice, though, the outlook is much brighter.  Since we look for
only Pareto-optimal objects, it is an overkill to obtain the complete
preference relations.  Specifically, for a Pareto-optimal object, knowing
that it is not dominated by any object is sufficient, and we do not need to
find all the objects dominated by it; for a non-Pareto optimal object,
knowing that it is dominated by at least one object is sufficient, and we
do not need to find all its dominating objects.
Hence, without exhausting all possible comparisons, the incomplete knowledge
on preference relations collected from a set of questions may suffice in
fully discerning all Pareto-optimal objects.

Our objective is to find all Pareto-optimal objects with as few questions as
possible.  By pursuing this goal, we are applying a very simple
cost model---the cost of a solution only depends on its number of questions.
Although the cost of a task in a crowdsourcing environment
may depend on monetary cost, latency and other factors, the number of questions
is a generic, platform-independent cost measure and arguably proportionally
correlates with the real cost.  Therefore, we assume a sequential execution model which
asks the crowd an ordered sequence of questions
$Q=\langle q_1,...,q_n \rangle$---it only asks $q_{i+1}$ after $\rlt(q_i)$
is obtained.  Thereby, we do not consider asking multiple questions concurrently.
Furthermore, in discussion of our approach, the focus shall be on how to find a
short question sequence instead of the algorithms' complexity.

\begin{figure}
    \centering
    \includegraphics[width=85mm]{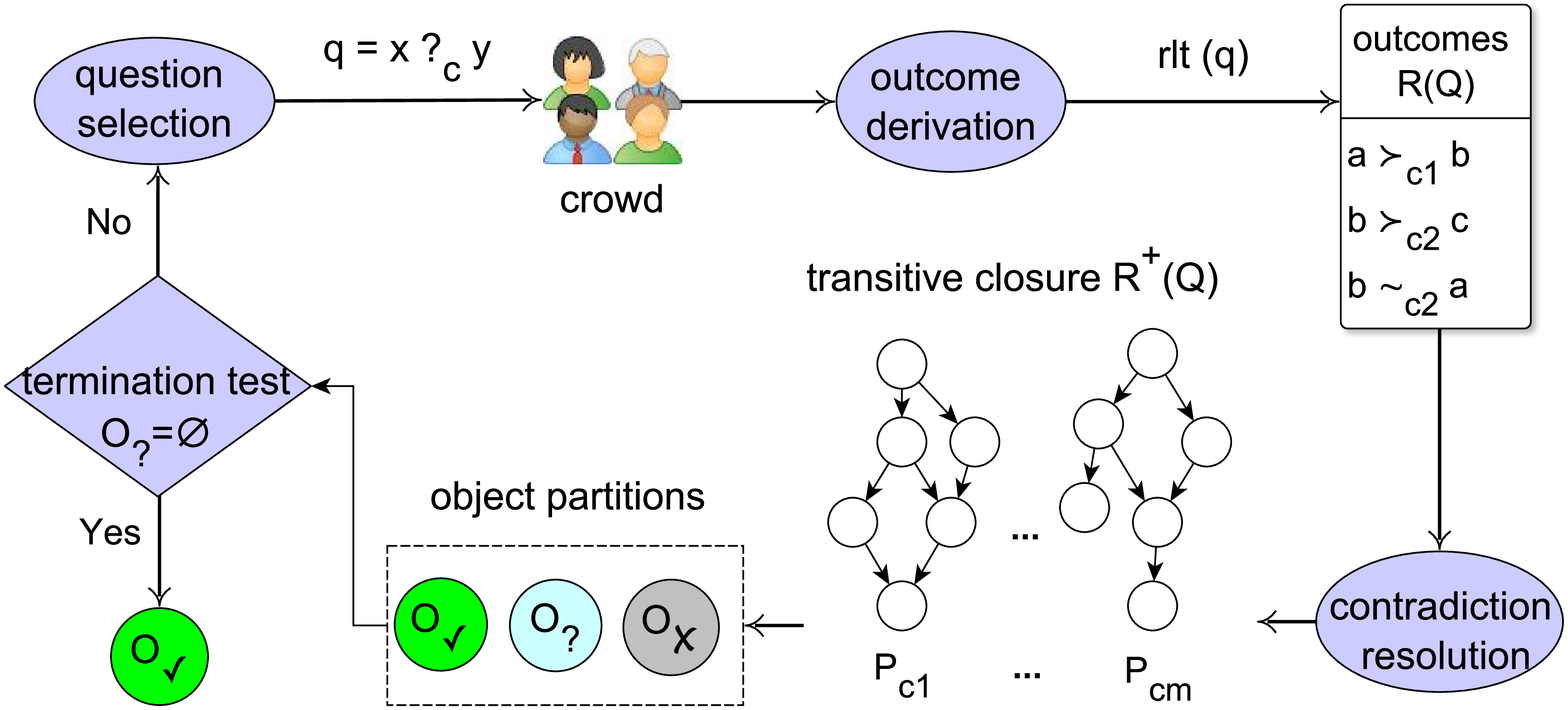}
    \caption{\small{The general framework.}}
    \label{fig:framework}
    \vspace{-2mm}
\end{figure}

\begin{algorithm}[t]
\small
\LinesNumbered
\SetCommentSty{textsf}

\KwIn{$O$ {\textsf{\footnotesize: the set of objects}}}

\KwOut{$O_\surd$ {\footnotesize \textsf{: Pareto-optimal objects of $O$}}}

\BlankLine
$R(Q) \leftarrow \emptyset$\tcc*{\footnotesize question outcomes}

\Repeat{$O_?=\{\}$}
{
	$\compxyc\ \leftarrow$ question selection\;
	$rlt(\compxyc) \leftarrow$ outcome derivation\tcc*{\footnotesize resolve conflict, if any}
	$R(Q) \leftarrow R(Q)\cup \{ rlt(\compxyc) \}$\;
	$(O_\surd, O_\times, O_?) \leftarrow$ partitioning objects based on $R^+(Q)$\tcc*{\footnotesize $R^+(Q)$ is the transitive closure of $R(Q)$}
}
\Return $O_\surd$\;
\caption{The general framework}
\label{alg:general}
\end{algorithm}

To find a short sequence, we design a general algorithm
framework, as displayed in Fig.\ref{fig:framework}.
Alg.\ref{alg:general} shows the
framework's pseudo-code.  Its execution is iterative.  Each
iteration goes through four steps---\emph{question selection},
\emph{outcome derivation}, \emph{contradiction resolution},
and \emph{termination test}.  In the $i$-th iteration, a question
$q_i$$=$$\compxyc$ is selected and presented to the crowd.  The question outcome
$rlt(q_i)$ is derived from the crowd's aggregated answers.
On unusual occasions, if the outcome presents a contradiction to the obtained
outcomes of other questions so far, it is changed to the closest outcome to resolve
contradiction.  By computing $R^+(Q_i)$, the \emph{transitive closure} of $R(Q_i)$---the
obtained outcomes to questions so far $\langle q_1, \ldots, q_{i} \rangle$, the outcomes
to certain questions are derived and such questions will never be asked.  Based on $R^+(Q_i)$,
if every object is determined to be either Pareto-optimal or non-Pareto
optimal without uncertainty, the algorithm terminates.

Below, we discuss outcome derivation and termination test.
Sec.\ref{sec:q-select} examines the framework's key step---question selection,
and Sec.\ref{sec:contradict} discusses contradiction resolution.

\vspace{-2mm}
{\flushleft \textbf{Outcome derivation}}\hspace{2mm}
Given a question \compxyc, its outcome \rlt(\compxyc) must be aggregated
from multiple crowdsourcers, in order to reach a reliable result with confidence.
Particularly, one of three mutually-exclusive outcomes is determined based
on $k$ crowdsourcers' answers to the question:\vspace{-2mm}
\begin{equation}\label{eq:outcome}
\hspace{-1mm}\rlt(\compxyc)=\left\{
\begin{array}{lll}
	\domC{x}{y}{c} & \text{if}\ \frac{\#\obj{x}}{k} \geq \theta \\
	\domC{y}{x}{c} & \text{if}\ \frac{\#\obj{y}}{k} \geq \theta \\
	\indiffC{x}{y}{c}\ (\obj{x} \nsucc_c \obj{y} \wedge \obj{y} \nsucc_c \obj{x}) & \text{otherwise}
\end{array}
\right.
\end{equation}
where $\theta$ is such a predefined threshold that $\theta$$>$$50\%$,
$\#\obj{x}$ is the number of crowdsourcers (out of $k$) preferring \obj{x} over \obj{y}
on criterion $c$, and $\#\obj{y}$ is the number of crowdsourcers preferring \obj{y}
over \obj{x} on $c$.  Fig.\ref{fig:crowd-pref-movie} shows the outcomes of all
$15$ questions according to Equation~(\ref{eq:outcome}) for comparing
movies by \emph{story} using $k$$=$$5$ and $\theta$$=$$60\%$.
Other conceivable definitions may be used in determining
the outcome of \compxyc.  For example, the outcome may be defined as
the choice (out of the three possible choices) that receives the most
votes from the crowd.  The ensuing discussion is agnostic to the specific
definition.

The current framework does not consider different levels of confidence on question outcomes.  The confidence on the outcome of a question may be represented as a probability value based on the distribution of crowdsourcers' responses. An interesting direction for future work is to find Pareto-optimal objects in probabilistic sense.  The confidence may also reflect the crowdsourcers' quality and credibility~\cite{quality-ipeirotis-2010}.

\vspace{-2mm}
{\flushleft \textbf{Termination test}}\hspace{2mm}
In each iteration, Alg.\ref{alg:general} partitions the objects into
three sets by their Pareto-optimality based on the transitive closure
of question outcomes so far.  If every object's
Pareto-optimality has been determined without uncertainty, the algorithm
terminates.  Details are as follows. \vspace{-1mm}


\begin{definition}[Transitive Closure of Outcomes] \label{def:closure}
Given a set of questions $Q$$=$$\langle q_1,...,q_n \rangle$, the transitive
closure of their outcomes $R(Q)$$=$ $\{\rlt(q_1),$ $...,\rlt(q_n)\}$ is
$R^+(Q)$$=$\{\indiffC{x}{y}{c} $|$ \indiffC{x}{y}{c} $\in R(Q)$\} $\bigcup$
\{\domC{x}{y}{c} $|\; ($\domC{x}{y}{c} $\in R(Q)) \vee ( \exists$ \obj{w$_1$,w$_2$,...,w$_m$} : \obj{w$_1$}$=$\obj{x}, \obj{w$_m$}$=$\obj{y} $\wedge \; (
\forall$$0$$<$$i$$<$$m$ : \obj{w$_i$} $\succ_c$ \obj{w$_{i+1}$} $\in R(Q)))$ \}. \qed \vspace{-2mm}
\end{definition}

In essence, the transitive closure dictates \domC{x}{z}{c} without asking
the question \paircomp{x}{z}{c}, if the existing outcomes $R(Q)$ (and recursively
the transitive closure $R^+(Q)$) contains both \domC{x}{y}{c} and
\domC{y}{z}{c}.  Based on $R^+(Q)$, the objects $O$ can be partitioned into three
sets: \vspace{-2mm}

{\flushleft $O_\surd = \{\obj{x}\in O\ |\ \forall \obj{y}\in O : (\exists c\in C :$ \domC{x}{y}{c} $\in R^+(Q)) \vee  (\forall c\in C :$ \indiffC{x}{y}{c} $\in R^+(Q))\}$;}\vspace{-2mm}
{\flushleft $O_\times =\{\obj{x}\in O\; |\: \exists \obj{y}\in O : (\forall c\in C :$ \domC{y}{x}{c} $\in R^+(Q) \vee$ \indiffC{x}{y}{c} $\in R^+(Q)) \wedge (\exists c\in C :$ \domC{y}{x}{c} $\in R^+(Q))\}$;}\vspace{-2mm}
{\flushleft $O_? = O \backslash (O_\surd \cup O_\times)$.}\vspace{-2mm}

{\flushleft $O_\surd$ contains} objects that must be Pareto-optimal,
$O_\times$ contains objects that cannot possibly be Pareto-optimal,
and $O_?$ contains objects for which the incomplete knowledge $R^+(Q)$
is insufficient for discerning their Pareto-optimality.  The objects
in $O_?$ may turn out to be Pareto-optimal after more
comparison questions.  If the set $O_?$ for a question sequence $Q$
is empty, $O_\surd$ contains all Pareto-optimal objects and the
algorithm terminates.  We call such a $Q$ a \emph{terminal sequence},
defined below. \vspace{-1mm}

\begin{definition}[Terminal Sequence]
A question sequence $Q$ is a terminal sequence if and only if,
based on $R^+(Q)$, $O_?$$=$$\emptyset$.
\end{definition}

\subsection{Question Selection}\label{sec:q-select}

Given objects $O$ and criteria $C$, there can be a huge number of terminal
sequences.  Our goal is to find a sequence as short as possible.
As Fig.\ref{fig:framework} and Alg.\ref{alg:general} show, the framework
is an iterative procedure of object partitioning based on question outcomes.
It can also be viewed as the process of moving objects from $O_?$ to
$O_\surd$ and $O_\times$.  Once an object is moved to $O_\surd$ or $O_\times$,
it cannot be moved again.  With regard to this process, we make two important
observations, as follows.

\begin{list}{$\bullet$}
{ \setlength{\leftmargin}{0.5em} \setlength{\itemsep}{0pt} }

\item \emph{In order to declare an object \obj{x} not Pareto-optimal, it is sufficient
to just know \obj{x} is dominated by another object}.
It immediately follows that we do not really care about
the dominance relationship between objects in $O_\times$ and thus can
skip the comparisons between such objects. Once we know
\obj{x}$\in$$O_?$ is dominated by another object, it cannot be
Pareto-optimal and is immediately moved to $O_\times$.
Quickly moving objects into $O_\times$ can allow us
skipping many comparisons between objects in $O_\times$.

\item \emph{In order to declare an object \obj{x} Pareto-optimal, it is necessary to know
that no object can dominate \obj{x}.} This means we may need to compare
\obj{x} with all other objects including non Pareto-optimal objects.  As an extreme example,
it is possible for \obj{x} to be dominated by only a non-Pareto optimal object
\obj{y} but not by any other object (not even the objects dominating \obj{y}).  This is because
object dominance based on preference relations is intransitive, which is formally
stated in Property~\ref{property:intransitivity}.

\begin{property}[Intransitivity of Object Dominance]\label{property:intransitivity}
Object dominance based on the preference relations over a set of criteria is
not transitive.  Specifically, if \domO{x}{y} and \domO{y}{z}, it is not
necessarily true that \domO{x}{z}.  In other words, it is possible that
\indiffO{x}{z} or even \domO{z}{x}. \qed
\end{property}

We show the intransitivity of object dominance by an example. Consider
objects $O$$=$$\{\obj{x,y,z}\}$, criteria $C$$=$$\{c_1,c_2,c_3\}$, and the preference
relations in Fig.\ref{fig:intransitivity}.  Three dominance
relationships violate transitivity: (i) \domO{x}{y} (based on
\domC{x}{y}{c_1}, \indiffC{x}{y}{c_2}, \indiffC{x}{y}{c_3}), (ii) \domO{y}{z}
(based on \indiffC{y}{z}{c_1}, \domC{y}{z}{c_2}, \indiffC{y}{z}{c_3}), and (iii)
\domO{z}{x} (based on \indiffC{z}{x}{c_1}, \indiffC{z}{x}{c_2}, \domC{z}{x}{c_3}).
As another example, in Fig.\ref{fig:poset-movie}, \domO{b}{c} (since
\indiffC{b}{c}{s}, \indiffC{b}{c}{m}, \domC{b}{c}{a}, where $s$$=$\emph{story}, $m$$=$\emph{music},
$a$$=$\emph{acting}) and \domO{c}{a} (since \domC{c}{a}{s}, \indiffC{c}{a}{m},
\domC{c}{a}{a}), but \indiffO{a}{b} (since \domC{b}{a}{s}, \domC{a}{b}{m},
\domC{b}{a}{a}) where transitivity does not hold.

\begin{figure}
    \centering
    \includegraphics[width=65mm]{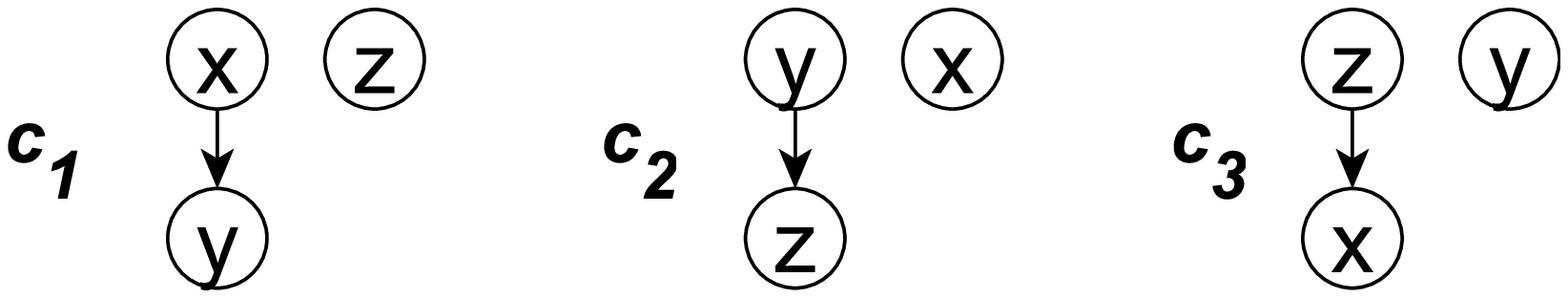}\vspace{-1mm}
    \caption{\small{Intransitivity of object dominance: \domO{x}{y}, \domO{y}{z}, \domO{z}{x}.}}
    \label{fig:intransitivity}
    \vspace{-1mm}
\end{figure}

Differently, transitivity of object dominance holds in skyline
analysis~\cite{borzsony2001skyline}.  The contradiction is due to the lack
of explicit attribute representation---in our case two objects are
considered equally good on a criterion if they are indifferent, while in
skyline analysis they are equally good regarding an attribute if they bear
identical values.  Skyline query algorithms exploit the transitivity of
object dominance to reduce execution cost, because an object can be
immediately excluded from further comparison once it is found dominated by
any other object.  However, due to Property~\ref{property:intransitivity},
we cannot leverage such pruning anymore.
\end{list}

\begin{algorithm}[t]
\small
\LinesNumbered
\SetCommentSty{textsf}

\KwIn{$R^+(Q_i), O_\surd(Q_i), O_?(Q_i), O_\times(Q_i)$}

\KwOut{$Q^1_{can}$ or $Q^2_{can}$}

\BlankLine
$Q_{can} \leftarrow \{ \compxyc\ |\ rlt(\compxyc) \notin R^+(Q_i) \wedge \obj{x} \in O_?(Q_i) \wedge (\nexists c' \in C : \obj{x} \succ_{c'} \obj{y} \in R^+(Q_i)) \}$;

$Q^1_{can} \leftarrow \{ \compxyc\ |\ \compxyc\ \in Q_{can}, \obj{y} \notin O_\times(Q_i) \}$;

$Q^2_{can} \leftarrow \{ \compxyc\ |\ \compxyc\ \in Q_{can}, \obj{y} \in O_\times(Q_i) \}$;

\BlankLine
\tcc{\footnotesize \emph{Macro-ordering: consider $Q^1_{can}$ before $Q^2_{can}$.}}
\uIf{$Q^1_{can} \neq \emptyset$}{
    \Return \emph{micro-ordering}($Q^1_{can}$);
}
\uElseIf{$Q^2_{can} \neq \emptyset$}{
    \Return \emph{micro-ordering}($Q^2_{can}$);
}

\caption{Question selection}
\label{alg:qselect}
\end{algorithm}

Based on these observations, the overriding principle of our question
selection strategy (shown in Alg.\ref{alg:qselect}) is to identify
non-Pareto optimal objects as early as possible.  At every iteration of the framework
(Alg.\ref{alg:general}), we choose to compare \obj{x} and \obj{y} by criterion
$c$ (i.e., ask question \compxyc) where \compxyc\ belongs to
\emph{candidate questions}.  Such candidate questions must satisfy three
conditions (Definition~\ref{def:candidate}).
There can be many candidate questions.  In choosing the next question,
by the aforementioned principle, we select such \compxyc\ that
\obj{x} is more likely to be dominated by \obj{y}.  More specifically,
we design two ordering heuristics---\emph{macro-ordering}
and \emph{micro-ordering}.
Given the three object partitions $O_\surd$, $O_\times$
and $O_?$, the macro-ordering idea is simply that we choose \obj{x} from $O_?$
(required by one of the conditions on candidate questions)
and \obj{y} from $O_\surd \cup O_?$ (if possible) or $O_\times$ (otherwise).
The reason is that it is less likely for an object in $O_\times$ to dominate \obj{x}.
Micro-ordering further orders all candidate questions satisfying the macro-ordering heuristic.
In Sec.\ref{sec:alg}, we instantiate the framework into a variety of solutions with varying
power in pruning questions, by using different micro-ordering heuristics.

\begin{definition}[Candidate Question]\label{def:candidate}
Given $Q$, the set of asked questions so far, \compxyc\ is a
\emph{candidate question} if and only if it satisfies the following conditions:\vspace{-1mm}
\begin{enumerate}[(i)]
\itemsep=-1pt
\topsep=-1pt
  \item\label{item:1} The outcome of \compxyc\ is unknown yet, i.e., $rlt(\compxyc) \notin R^+(Q)$;
  \item\label{item:2} \obj{x} must belong to $O_?$;
  \item\label{item:3} Based on $R^+(Q)$, the possibility of \domO{y}{x} must not be ruled out yet, i.e., $\nexists c' \in C : \obj{x} \succ_{c'} \obj{y} \in R^+(Q)$.
\end{enumerate}
\vspace{-1mm}

We denote the set of candidate questions by $Q_{can}$.  Thus,
$Q_{can}$ $=$ $\{ \compxyc\ |\ rlt(\compxyc) \notin R^+(Q) \wedge \obj{x} \in O_? \wedge (\nexists c' \in C : \obj{x} \succ_{c'} \obj{y} \in R^+(Q)) \}$. \qed \vspace{-2mm}
\end{definition}

If no candidate question exists, the question sequence $Q$ is a terminal sequence.
The reverse statement is also true, i.e., upon a terminal sequence, there is no candidate question left.
This is formalized in the following property.\vspace{-1mm}

\begin{property}\label{property:emptycand}
$Q_{can} = \emptyset$ if and only if $O_? = \emptyset$.\vspace{-1mm}
\end{property}
\begin{proof}
It is straightforward that
$O_?$$=$$\emptyset$$\Rightarrow$$Q_{can}$$=$$\emptyset$, since an empty $O_?$
means no question can satisfy condition (\ref{item:2}).  We prove
$Q_{can}$$=$$\emptyset$$\Rightarrow$$O_?$$=$$\emptyset$ by proving its
equivalent contrapositive $O_?$$\neq$$\emptyset$$\Rightarrow$$Q_{can}$$\neq$$\emptyset$.
Assume $O_?$$\neq$$\varnothing$, i.e., $O_?$ contains at least one object \obj{x}
(condition (\ref{item:2}) satisfied).  Since \obj{x} does not belong to $O_\surd$, there
exists at least an object \obj{y} that may turn out to dominate \obj{x}
(condition (\ref{item:3}) satisfied).  Since \obj{x} is not in
$O_\times$, we cannot conclude yet that \obj{y} dominates
\obj{x}.  Hence, there must exist a criterion $c$ for which we do not know
the outcome of \obj{x}$?_c$\obj{y} yet, i.e., $rlt(\obj{x} ?_c
\obj{y})\notin R^+(Q)$ (condition (\ref{item:1}) satisfied).  The question
$\obj{x} ?_c \obj{y}$ would be a candidate question because it satisfies
all three conditions.  Hence $O_{can}$$\neq$$\varnothing$.
\end{proof}

Questions violating the three conditions may
also lead to terminal sequences. However, choosing only candidate questions matches
our objective of quickly identifying non-Pareto optimal objects.
Below we justify the conditions.

Condition (\ref{item:1}): This is straightforward.  If $R(Q)$ or its transitive
closure already contains the outcome of \compxyc, we do not ask the same question again.

Condition (\ref{item:2}): This condition essentially dictates that at least
one of the two objects in comparison is from $O_?$.  (If only one of them belongs
to $O_?$, we make it \obj{x}.)  Given a pair \obj{x} and \obj{y}, if neither is from $O_?$, there are three
scenarios---(1) $\obj{x}$$\in$$O_\surd, \obj{y}$$\in$$O_\surd$, (2) $\obj{x}$$\in$$O_\surd,
\obj{y}$$\in$$O_\times$ or $\obj{x}$$\in$$O_\times, \obj{y}$$\in$$O_\surd$, (3) $\obj{x}$$\in$$O_\times,
\obj{y}$$\in$$O_\times$.  Once we know an object is in $O_\surd$ or $O_\times$,
its membership in such a set will never change.  Hence, we are not interested
in knowing the dominance relationship between objects from $O_\surd$ and
$O_\times$ only.  In all these three scenarios, comparing \obj{x} and \obj{y} is only useful
for indirectly determining (by transitive closure) the outcome of comparing other objects.  Intuitively
speaking, such indirect pruning is not as efficient as direct pruning.

Condition (\ref{item:3}):  This condition requires that, when
\compxyc\ is chosen, we cannot rule out the possibility of \obj{y}
dominating \obj{x}.  Otherwise, if \obj{y} cannot possibly dominate \obj{x},
the outcome of \compxyc\ cannot help prune \obj{x}.  Note that, in
such a case, comparing \obj{x} and \obj{y} by $c$ may help prune
\obj{y}, if \obj{y} still belongs to $O_?$ and \obj{x} may
dominate \obj{y}.  Such possibility is not neglected and is
covered by a different representation of the same question---\paircomp{y}{x}{c}, i.e.,
swapping the positions of \obj{x} and \obj{y} in checking the three
conditions.  If it is determined \obj{x} and \obj{y} cannot dominate each
other, then their further comparison is only useful for indirectly
determining the outcome of comparing other objects.  Due to the same reason
explained for condition (\ref{item:2}), such indirect pruning is less
efficient.

The following simple Property~\ref{property:nondominance} helps to determine
whether \domO{y}{x} is possible: If \obj{x} is better than \obj{y} by any
criterion, then we can already rule out the possibility of \domO{y}{x},
without knowing the outcome of their comparison by every criterion.  This
allows us to skip further comparisons between them.  Its correctness is
straightforward based on the definition of object dominance.

\begin{property}[Non-Dominance Property] \label{property:nondominance}
At any given moment, suppose the set of asked questions is $Q$.
Consider two objects \obj{x} and \obj{y} for
which the comparison outcome is not known for every criterion, i.e.,
$\exists c$ such that $rlt(\compxyc) \notin R^+(Q)$.  It can be determined
that \obj{y}$\nsucc$\obj{x} if $\exists c\in C$ such that
\domC{x}{y}{c}$\in R^+(Q)$. \qed \vspace{-1mm}
\end{property}

In justifying the three conditions in defining candidate questions,
we intuitively explained that indirect pruning is less
efficient---if it is known that \obj{x} does not belong to $O_?$
or \obj{y} cannot possibly dominate \obj{x}, we will not ask question
\compxyc.  We now justify this strategy theoretically and precisely.
Consider a question sequence $Q$$=$$\langle q_1, \ldots, q_n \rangle$.
We use $O_\surd(Q)$, $O_?(Q)$, $O_\times(Q)$ to denote object partitions
according to $R^+(Q)$.
For any question $q_i$, the subsequence comprised of its preceding questions
is denoted $Q_{i-1}$$=$$\langle q_1, \ldots, q_{i-1} \rangle$.
If $q_i$ was not a candidate question when it was chosen (i.e.,
after $R(Q_{i-1})$ was obtained), we say it is a \emph{non-candidate}.
The following Theorem~\ref{thm:optimal} states that, if a question sequence
contains non-candidate questions, we can replace it by a shorter or equally long
sequence without non-candidate questions that produces the same set of
dominated objects $O_\times$.  Recall that the key to our framework is to recognize
dominated objects and move them into $O_\times$ as early as possible.  Hence,
the new sequence will likely lead to less cost when the algorithm terminates.
Hence, it is a good idea to only select among candidate questions.

\begin{theorem}\label{thm:optimal}
If $Q$ contains non-candidate questions, there exists a question sequence
$Q'$ without non-candidate questions such that $|Q'|\leq |Q|$ and $O_\times (Q') = O_\times (Q)$.
\end{theorem}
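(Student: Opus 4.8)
The plan is to rebuild a clean sequence from scratch rather than edit $Q$ locally: I would certify the membership of each object of $O_\times(Q)$ by \emph{direct} comparisons against a fixed dominator, bypassing the indirect, transitively-derived reasoning that $Q$ may have used. Write $O_\times=O_\times(Q)$. For each \obj{w}$\in O_\times$ the partitioning rule supplies a witness \obj{v$_w$} with \domO{v$_w$}{w} certified by $R^+(Q)$; since $R^+(Q)$ records only true outcomes, \obj{v$_w$} genuinely dominates \obj{w}. I would form $Q'$ by handling the objects of $O_\times$ one block at a time, asking for each \obj{w} the comparisons \paircomp{w}{v$_w$}{c} over every criterion $c\in C$ whose outcome is not already entailed, recording the true outcome each time.

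First I would show that $Q'$ uses only candidate questions by checking Definition~\ref{def:candidate} at each selection. Condition~(\ref{item:3}) holds because \domO{v$_w$}{w} forces \obj{w}$\nsucc_{c'}$\obj{v$_w$} for all $c'$, so the possibility of \obj{v$_w$} dominating \obj{w} is never ruled out. Condition~(\ref{item:2}) holds because any in-edge to \obj{w} created in $Q'$ must originate inside \obj{w}'s own block, so before that block \obj{w} is undominated, and within it \obj{w} stays in $O_?$ until its domination is confirmed; it is also not prematurely placed in $O_\surd$, since on the criteria where \domC{v$_w$}{w}{c} the pair is not indifferent. Condition~(\ref{item:1}) is enforced by skipping any already-entailed comparison.

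Next I would prove $O_\times(Q')=O_\times$. The inclusion $O_\times\subseteq O_\times(Q')$ is immediate, each block certifying its \domO{v$_w$}{w}. For the reverse inclusion, every strict fact that $Q'$ adds has the form \domC{v$_w$}{w}{c} with \obj{w}$\in O_\times$, so every $\succ_c$ edge in $R^+(Q')$, including transitively derived ones, has its head in $O_\times$; since any object recognized as dominated must be the head of some strict edge, no object outside $O_\times$ can be dominated under $R^+(Q')$, ruling out overshoot.

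The crux, and the step I expect to be the main obstacle, is the length bound $|Q'|\le|Q|$. I would build an injective map from $O_\times\times C$ into the questions of $Q$. Given \obj{w}$\in O_\times$ and $c\in C$: if \indiffC{v$_w$}{w}{c}, then since indifference is never produced by transitivity (Definition~\ref{def:closure}) this is a direct outcome, and I charge $(\obj{w},c)$ to the $Q$-question comparing \obj{w} and \obj{v$_w$} on $c$; if \domC{v$_w$}{w}{c}, the $\succ_c$-path from \obj{v$_w$} to \obj{w} in $R^+(Q)$ ends in a direct edge \domC{u}{w}{c}, and I charge $(\obj{w},c)$ to that question. In either case the charged question compares \obj{w} on criterion $c$, with \obj{w} appearing as the indifferent partner of its witness or as the head of a strict edge. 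A short case analysis using only irreflexivity and asymmetry of outcomes shows that for \obj{w}$\neq$\obj{w'} the pairs $(\obj{w},c)$ and $(\obj{w'},c)$ cannot map to the same question, so the map is injective and $|Q|\ge|C|\cdot|O_\times|$; since $Q'$ asks at most $|C|$ questions per object, $|Q'|\le|C|\cdot|O_\times|\le|Q|$. The delicate point to nail down is exactly this injectivity---that the boundary question certifying one object's domination on a criterion can never simultaneously serve as the boundary question for a different object on that same criterion.
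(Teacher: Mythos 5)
Your proof is sound, but it is a genuinely different argument from the paper's. The paper proceeds by \emph{local surgery} on $Q$: it examines each non-candidate $q_i$, deletes it when its outcome was already entailed or was a harmless indifference, and otherwise substitutes a nearby candidate question (recursively, in the $\succ$ cases) so that every object's domination certificate survives; the work is a case analysis over which condition of Definition~\ref{def:candidate} is violated and over the three possible outcomes. You instead discard $Q$ entirely and rebuild a canonical certificate---$|C|$ direct questions per dominated object against one fixed witness, so $|Q'|=|C|\cdot|O_\times(Q)|$---and close the length gap with an injective charging of $O_\times(Q)\times C$ into $Q$. That charging is essentially Property~\ref{property:minquestion} plus part (1) of the proof of Theorem~\ref{th:lowerbound}, down to the same tie-breaker (a shared indifference question would force mutual domination, which is impossible). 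What your route buys: it is shorter, it unifies Theorem~\ref{thm:optimal} with the lower bound, and it yields the sharper conclusion $|Q'|\le|C|\cdot|O_\times(Q)|$. What the paper's route buys: its $Q'$ retains every candidate question of $Q$, hence all knowledge relevant to $O_\surd$ and to continuing the algorithm, whereas your $Q'$ forgets everything except the domination certificates; both proofs, though, tacitly assume a question's outcome does not depend on when it is asked. Two details worth making explicit in your write-up, neither of which is a gap: (i) before \obj{w}'s own block no strict edge into \obj{w} and no indifference between \obj{w} and \obj{v$_w$} can be present in $R^+(Q')$ (indifference is never transitively derived, and every recorded strict edge points from a witness to the object it certifies), so all $|C|$ questions of the block are genuinely unknown when asked, \obj{w} stays in $O_?$ throughout the block and enters $O_\times$ only on its last question, and your ``skip if entailed'' caveat is vacuous; (ii) in the reverse inclusion, state explicitly that transitive composition preserves the head of a strict edge, so even the derived $\succ_c$ facts in $R^+(Q')$ terminate at members of $O_\times(Q)$.
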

\begin{proof}
We prove by demonstrating how to transform $Q$ into such a $Q'$.  Given any
non-candidate question $q_i$$=$$\compxyc$ in $Q$, we remove it and, when necessary,
replace several questions.  The decisions and choices are
partitioned into the following three mutually exclusive scenarios, which
correspond to violations of the three conditions in Definition~\ref{def:candidate}.

Case (\ref{item:1}): $q_i$ violates condition (\ref{item:1}), i.e.,
$rlt(q_i)\in R^+(Q_{i-1})$.  We simply remove $q_i$ from $Q$,
which does not change $O_\times$, since the transitive closure already contains
the outcome.

Case (\ref{item:2}): $q_i$ conforms to condition (\ref{item:1}) but violates
condition (\ref{item:2}), i.e., $rlt(q_i)\notin R^+(Q_{i-1})$ and
$\obj{x} \notin O_?(Q_{i-1})$.  The proof for this case is similar to a subset of the
proof for the following case(\ref{item:3}).
We omit the complete proof.

Case (\ref{item:3}): $q_i$ conforms to conditions (\ref{item:1}) and (\ref{item:2})
but violates condition (\ref{item:3}), i.e., $rlt(q_i)$$\notin$$R^+(Q_{i-1})$,
$\obj{x}$$\in$$O_?(Q_{i-1})$, $\exists c' \in C :$ \domC{x}{y}{c'}.
There are three possible subcases, as follows.

(\ref{item:3}-1): $rlt(q_i)$$=$\indiffC{x}{y}{c}.  We simply remove
$q_i$ from $Q$ and thus remove \indiffC{x}{y}{c} from $R^+(Q)$.
(We shall explain one exception, for which we replace $q_i$ instead of
removing it.)
The removal of $q_i$ does not change object partitioning and thus does not change $O_\times$, as explained
below.  The difference between $R^+(Q)$ and
$R(Q)$ is due to transitivity.  Since \indiffC{x}{y}{c} does not participate
in transitivity, we only need to consider the direct impact of removing
\indiffC{x}{y}{c} from $R^+(Q)$.
Therefore, \textbf{(1)} with respect to any object \obj{z} that is not \obj{x} or \obj{y},
\indiffC{x}{y}{c} does not have any impact on \obj{z} since it
does not involve \obj{z}.
\textbf{(2)} With regard to \obj{x}, if \obj{x}$\notin$$O_\times(Q)$, removing
\indiffC{x}{y}{c} from $R^+(Q)$ will not move \obj{x} into $O_\times(Q')$;
if \obj{x}$\in$$O_\times(Q)$, then $Q$ must contain comparisons between
\obj{x} and \obj{z}$\neq$\obj{y} such that \domO{z}{x}.
(\obj{z}$\neq$\obj{y}, because case(\ref{item:3}) violates condition (\ref{item:3}),
i.e., \domO{y}{x} is impossible.)
Removing \indiffC{x}{y}{c} from $R^+(Q)$ does not affect
the comparisons between \obj{z} and \obj{x} and thus does not affect $O_\times$.
\textbf{(3)} For \obj{y}, if \obj{y}$\notin$$O_\times(Q)$,
removing \indiffC{x}{y}{c} from $R^+(Q)$ will not move \obj{y} into $O_\times(Q')$;
if \obj{y}$\in$$O_\times(Q)$, then there are three possible situations---(a) If
\obj{y}$\in$$O_\times(Q_{i-1})$, removing \indiffC{x}{y}{c} will not move \obj{y}
out of $Q_\times$ and thus will not change $Q_\times$. (b) If \obj{y}$\in$$O_?(Q_{i-1})$ and \domO{x}{y} was
ruled out before $q_i$, then $Q$ must contain comparisons between \obj{y} and \obj{z}$\neq$\obj{x}
such that \obj{z} dominates \obj{y}. Removing \indiffC{x}{y}{c} from $R^+(Q)$ does not affect
$Q_\times$ since it does not affect the comparisons between \obj{z} and \obj{y};
(c) If \obj{y}$\in$$O_?(Q_{i-1})$ and \domO{x}{y} was not ruled out before $q_i$,
then we replace $q_i$ (instead of removing $q_i$) by \obj{y}$?_{c}$\obj{x}.
Note that \obj{y}$?_{c}$\obj{x} and \obj{x}$?_{c}$\obj{y} (i.e., $q_i$) are
the same question but different with regard to satisfying the three conditions for candidate questions.
Different from $q_i$, \obj{y}$?_{c}$\obj{x} is a candidate question when it is chosen (i.e., with regard to
$Q_{i-1}$)---$rlt(\obj{y} ?_{c} \obj{x})$$\notin$$R^+(Q_{i-1})$,
$\obj{y}$$\in$$O_?(Q_{i-1})$, \domO{x}{y} is not ruled out.

(\ref{item:3}-2): $rlt(q_i)$$=$\domC{x}{y}{c}. We remove $q_i$ and replace some questions in $Q$.
Consider $S$$=$$O_\times(Q) \setminus O_\times(Q$$\setminus$$\{q_i\})$,
i.e., the objects that would be in $O_\times(Q)$ but instead are in $O_?(Q$$\setminus$$\{q_i\})$ due to the removal of \domC{x}{y}{c} form $R^+(Q)$.
The question replacements are for maintaining $O_\times$ intact.
Fig.\ref{fig:proof} eases our explanation of this case.
Suppose $S_1$$=$$\{\obj{y}\} \cup \{ \obj{v}\;|\;$\domC{y}{v}{c}$\in$$R^+(Q)\}$
and $S_2$$=$$\{\obj{x} \}\cup \{\obj{u}\;|\;$\domC{u}{x}{c}$\in$$R^+(Q)\}$.
We can derive that \textbf{(1)} $S$$\subseteq$$S_1$.  The reason is that the outcome $rlt(q_i)$$=$\domC{x}{y}{c} may
have impact on whether other objects dominate \obj{v} only if $\obj{v}$$=$$\obj{y}$ or \domC{y}{v}{c}, i.e., \obj{v}$\in$$S_1$.
For an object \obj{v} not in $S_1$, the removal of $q_i$ cannot possibly move \obj{v} from $O_\times(Q)$ into $O_?(Q$$\setminus$$\{q_i\})$.
\textbf{(2)} $\forall \obj{v}$$\in$$S, \exists \obj{u}$$\in$$S_2$ such that \domO{u}{v} and \domC{u}{v}{c}.
This is because, if there does not exist such a $\obj{u}$, removing $q_i$ cannot possibly move \obj{v}
from $O_\times(Q)$ into $O_?(Q$$\setminus$$\{q_i\})$, which contradicts with \obj{v}$\in$$S$.

\begin{figure}[t]
    \centering
    \includegraphics[width=50mm]{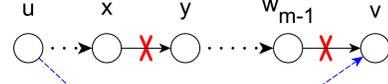}\vspace{-1mm}
    \caption{Question removal and replacement.}
    \label{fig:proof}
\end{figure}

According to the above results, $\exists \obj{w}_1,...,\obj{w}_{m-1},\obj{w}_m$ such that
$\obj{w}_1$$=$$\obj{u}, \obj{w}_m$$=$$\obj{v}, \forall 0$$<$$j$$<$$m$, \domC{w$_j$}{w$_{j+1}$}{c}$\in$$R(Q)$.
In order to make sure \obj{v} stays in $O_\times$, we replace the question
\paircomp{w$_{m-1}$}{v}{c} by \paircomp{v}{u}{c}.  If removing \paircomp{w$_{m-1}$}{v}{c} moves
any object from $O_\times$ into $O_?$, we recursively deal with it as we do for $q_i$.
Note that \paircomp{v}{u}{c} is a candidate question when it succeeds $Q_{i-1}$, since $rlt(\paircomp{v}{u}{c})$
$\notin R^+(Q_{i-1})$ (otherwise \obj{v} does not belong to $S$), \obj{v}$\in$$O_?(Q_{i-1})$ (again, since \obj{v}$\in$$S$),
and \domO{u}{v} cannot be ruled out.

(\ref{item:3}-3): $rlt(q_i)=$ \domC{y}{x}{c}.
This case is symmetric to (\ref{item:3}-2) and so is the proof. We thus omit the details.
\end{proof}

\subsection{Resolving Unusual Contradictions in Question Outcomes}\label{sec:contradict}

A preference relation can be more accurately derived, if more input is
collected from the crowd.  However, under practical
constraints on budget and time, the limited responses from
the crowd ($k$ answers per question) may present two types of
contradicting preferences.

(i) Suppose $\rlt(\compxyc)$$=$\domC{x}{y}{c} and $\rlt(\paircomp{y}{z}{c})$$=$\domC{y}{z}{c}
have been derived, i.e., they belong to $R(Q)$.
They together imply \domC{x}{z}{c}, since a preference relation must be transitive.
Therefore the question \paircomp{x}{z}{c} will not be asked.
If the crowd is nevertheless asked to further compare \obj{x} and \obj{z}, the result
$\rlt(\paircomp{x}{z}{c})$ might be possibly \domC{z}{x}{c}, which presents
a contradiction.

(ii) Suppose $\rlt(\compxyc)$$=$\indiffC{x}{y}{c} and $\rlt(\paircomp{y}{z}{c})$$=$\domC{y}{z}{c}
have been derived from the crowd.  If the crowd is asked to
further compare \obj{x} and \obj{z}, the result $\rlt(\paircomp{x}{z}{c})$ might be
possibly \domC{z}{x}{c}.  The outcomes \domC{y}{z}{c} and \domC{z}{x}{c} together imply
\domC{y}{x}{c}, which contradicts with \indiffC{x}{y}{c}.
(A symmetric case is $\rlt(\compxyc)$$=$\indiffC{x}{y}{c}, $\rlt(\paircomp{y}{z}{c})$$=$\domC{z}{y}{c},
and the crowd might respond with $\rlt(\paircomp{x}{z}{c})$$=$\domC{x}{z}{c},
which also leads to contradiction with \indiffC{x}{y}{c}.
The following discussion applies to this symmetric case, which is thus not
mentioned again.)

In practice, such contradictions are rare, even under just modest number of
answers per question ($k$) and threshold ($\theta$).  This is easy to
understand intuitively---as long as the underlying preference relation
is transitive, the collective wisdom of the crowds will reflect it.
We can find indirect evidence of it in \cite{rorvig-jasis90,carterette-ecir08}, which confirmed that preference judgments of relevance in document retrieval are transitive.
Our empirical results also directly verified it.  We asked Amazon Mechanical Turk
workers to compare $10$ photos by \emph{color}, \emph{sharpness} and \emph{landscape},
and we asked students at our institution to compare $10$ U.S. cities with regard to
\emph{weather}, \emph{living expenses}, and \emph{job opportunities}.
In both experiments, we asked all possible questions---comparing every pair of objects
by every criterion.  For each criterion, we considered the graph representing
the outcomes of questions, where a directed edge represents ``better-than'' and
an undirected edge represents ``indifferent''.  If there is such a ``cycle'' that
it contains zero or one undirected edge and all its directed edges are in the
same direction, the outcomes in the cycle form a contradiction.
We adapted depth-first search to detect all elementary cycles. (A cycle is elementary
if no vertices in the cycle (except the start/end vertex) appear more than once.)
The number of elementary cycles amounts to only 2.9\% and 2.2\% of the number of question
outcomes in the two experiments.
These values would be smaller if we had used larger $k$ and $\theta$.

Nevertheless, contradictions still occur.  Type (i) contradictions can be
prevented by enforcing the following simple Rule~\ref{rule:skip} to assume
transitivity and thus skip certain questions.
They will never get into the derived preference relations.  In fact, in calculating
transitive closure (Definition~\ref{def:closure}) and defining candidate questions
(Sec.\ref{sec:q-select}), we already apply this rule.\vspace{-1mm}

\begin{myrule}[Contradiction Prevention by Skipping Questions] \label{rule:skip}
Given objects \obj{x, y, z} and a criterion $c$, if $\rlt(\compxyc)$$=$\domC{x}{y}{c}
and $\rlt (\paircomp{y}{z}{c})$$=$\domC{y}{z}{c}, we assume $\rlt(\paircomp{x}{z}{c})$$=$\domC{x}{z}{c}
and thus will not ask the crowd to further compare \obj{x} and \obj{z} by criterion $c$. \qed \vspace{-2mm}
\end{myrule}

To resolve type (ii) contradictions, we enforce the following
simple Rule~\ref{rule:resolve}.\vspace{-1mm}

\begin{myrule}[Contradiction Resolution by Choosing Outcomes] \label{rule:resolve}
Consider objects \obj{x, y, z} and a criterion $c$. Suppose $\rlt(\compxyc)$$=$ \indiffC{x}{y}{c} and
$\rlt(\paircomp{y}{z}{c})$$=$\domC{y}{z}{c} are obtained from the crowd.
If $\rlt(\paircomp{x}{z}{c})$$=$\domC{z}{x}{c} is obtained from the crowd afterwards,
we replace the outcome of this question by \indiffC{x}{z}{c}. (Note that we do not
replace it by \domC{x}{z}{c}, since \domC{z}{x}{c} is closer to \indiffC{x}{z}{c}.) \qed
\end{myrule}



\section{Micro-Ordering in Question Selection} \label{sec:alg}
At every iteration of Alg.\ref{alg:general}, we choose a question
\compxyc\ from the set of candidate questions.  By macro-ordering, when
available, we choose a candidate question in which \obj{y} $\notin O_\times$,
i.e., we choose from $Q^1_{can}$.  Otherwise, we choose from $Q^2_{can}$.
The size of $Q^1_{can}$ and $Q^2_{can}$ can be large.  Micro-ordering is for
choosing from the many candidates.  As discussed in
Sec.\ref{sec:framework}, in order to find a short question sequence,
the overriding principle of our question selection strategy is to identify
non-Pareto optimal objects as early as possible.  Guided by this principle,
we discuss several micro-ordering strategies in this section.
Since the strategies are the same for $Q^1_{can}$ and $Q^2_{can}$, we will
simply use the term ``candidate questions'', without distinction between
$Q^1_{can}$ and $Q^2_{can}$.

\subsection{Random Question (\algname{RandomQ})}\vspace{-1mm}
\algname{RandomQ}, as its name suggests, simply selects a random candidate
question.  Table~\ref{table:randomQ} shows an execution
of the general framework under \algname{RandomQ} for Example~\ref{ex:movie}.
For each iteration $i$, the table shows the question outcome $rlt(q_i)$.
Following the question form \compxyc\ in Definition~\ref{def:candidate},
the object ``\obj{x}'' in a question is underlined when we present
the question outcome.
The column ``derived results'' displays derived question outcomes by
transitive closure (e.g., \domC{a}{e}{m} based on $rlt(q_7)$$=$\domC{d}{e}{m}
and $rlt(q_{10})$$=$\domC{a}{d}{m}) and derived object dominance (e.g.,
\domO{b}{d} after $q_{20}$).  The table also shows the object partitions
($O_\surd$, $O_?$ and $O_\times$) when the execution starts and when the
partitions are changed after an iteration.  Multiple iterations may be
presented together if other columns are the same for them.

\begin{table}
\begin{scriptsize}
\begin{center}
\begin{tabular}{|@{}c@{}|@{}c@{}|@{}c@{}|@{}c@{}|@{}c@{}|@{}c@{}|}
 \hline
 $i$ & $rlt(q_i)$ & Derived Results & $O_\surd$ & $O_?$ & $O_\times$ \\ \hline
 $1$-$9$ & \domC{b}{\underline{e}}{m}, \indiffC{\underline{c}}{d}{a}, \indiffC{\underline{a}}{c}{m} & & $\emptyset$ & \{\obj{a,b,c,d,e,f}\} & $\emptyset$ \\
 & \domC{\underline{c}}{e}{s}, \indiffC{\underline{b}}{d}{s}, \domC{b}{\underline{a}}{a} & & & & \\
 & \domC{\underline{d}}{e}{m}, \indiffC{\underline{b}}{d}{m}, \domC{\underline{b}}{f}{s} & & & & \\ \hline
 $10$ & \domC{\underline{a}}{d}{m} & \domC{a}{e}{m} & & & \\ \hline
 $11$ & \domC{c}{\underline{a}}{a} & & & & \\ \hline
 $12$ & \indiffC{\underline{b}}{c}{s} & & & & \\ \hline
 $13$ & \domC{c}{\underline{d}}{m} & \domC{c}{e}{m} & & & \\ \hline
 $14$-$19$ & \domC{d}{\underline{e}}{s}, \indiffC{\underline{e}}{c}{a}, \indiffC{\underline{d}}{f}{a} & & & & \\
 & \indiffC{\underline{a}}{d}{a}, \domC{f}{\underline{a}}{a}, \domC{\underline{b}}{e}{a} & & & & \\ \hline
 $20$ & \domC{b}{\underline{d}}{a} & \domO{b}{d} & $\emptyset$ & \{\obj{a,b,c,e,f}\} & \{\obj{d}\} \\ \hline
 $21$-$23$ & \domC{c}{\underline{f}}{s}, \domC{a}{\underline{e}}{s}, \indiffC{\underline{f}}{b}{m} & & & & \\ \hline
 $24$ & \domC{a}{\underline{f}}{s} & \indiffO{a}{f} & & & \\ \hline
 $25$ & \domC{\underline{e}}{f}{a} & \domC{b}{f}{a}, \domC{b}{a}{a} & $\emptyset$ & \{\obj{a,b,c,e}\} & \{\obj{d,f}\} \\
 & & \domC{e}{a}{a}, \domO{b}{f} & & & \\ \hline
 $26$ & \domC{\underline{b}}{c}{a} & & & & \\ \hline
 $27$ & \domC{a}{\underline{b}}{m} & & & & \\ \hline
 $28$ & \domC{b}{\underline{e}}{s} & \domO{b}{e} & $\emptyset$ & \{\obj{a,b,c}\} & \{\obj{d,e,f}\}\\ \hline
 $29$ & \domC{\underline{c}}{a}{s} & \domC{c}{e}{s}, \domO{c}{a} & $\emptyset$ & \{\obj{b,c}\} & \{\obj{a,d,e,f}\} \\ \hline
 $30$ & \indiffC{\underline{b}}{c}{m} & \domO{b}{c} & \{\obj{b}\} & $\emptyset$ & \{\obj{a,c,d,e,f}\} \\ \hline
\end{tabular}
\end{center}
\end{scriptsize}
\vspace{-3mm}
\caption{\small{\algname{RandomQ} on Example~\ref{ex:movie}.}} 
\label{table:randomQ}
\end{table}

As Table~\ref{table:randomQ} shows, this particular execution under
\algname{RandomQ} requires $30$ questions.  When the execution terminates,
it finds the only Pareto-optimal object \obj{b}.  This simplest
micro-ordering strategy (or rather no strategy at all) already avoids
many questions in the brute-force approach.  The example clearly demonstrates
the benefits of choosing candidate questions only and applying macro-strategy.


\subsection{Random Pair (\algname{RandomP})}\vspace{-1mm}
\algname{RandomP} randomly selects a pair of objects \obj{x} and \obj{y} and
keeps asking questions to compare them (\compxyc\ or \paircomp{y}{x}{c})
until there is no such candidate question, upon which it randomly picks another pair of
objects.  This strategy echoes our principle of eagerly identifying
non-Pareto optimal objects.  In order to declare an object \obj{x} non-Pareto
optimal, we must identify another object \obj{y} such that \obj{y} dominates
\obj{x}.  If we directly compare \obj{x} and \obj{y}, it requires comparing
them by every criterion in $C$ in order to make sure \domO{y}{x}.
By skipping questions according to
transitive closure, we do not need to directly compare them by every
criterion.  However, Property~\ref{property:minquestion} below states
that we still need at least $|C|$ questions involving \obj{x}---some
are direct comparisons with \obj{y}, others are comparisons with other objects
which indirectly lead to outcomes of comparisons with \obj{y}.  When there
is a candidate question \compxyc, it means \obj{y} may dominate \obj{x}.
In such a case, the fewer criteria remain for comparing them, the more likely
\obj{y} will dominate \obj{x}.  Hence, by keeping comparing the same
object pair, \algname{RandomP} aims at finding more
non-Pareto objects by less questions.

\begin{property}\label{property:minquestion}
Given a set of criteria $C$ and an object \obj{x}$\in$$O$, at least $|C|$
pairwise comparison questions involving \obj{x} are required in order to
find another object \obj{y} such that \domO{y}{x}.  
\end{property}
\begin{proof}
By the definition of object dominance, if \domO{y}{x}, then $\forall c$$\in$$C$,
either \domC{y}{x}{c}$\in$$R^+(Q)$ or \indiffC{x}{y}{c}$\in$$R^+(Q)$, and
$\exists c$$\in$$C$ such that \domC{y}{x}{c}$\in$$R^+(Q)$.  Given any particular
$c$, if \indiffC{x}{y}{c}$\in$$R^+(Q)$, then
\indiffC{x}{y}{c}$\in$$R(Q)$, i.e., a question \compxyc\ or
\paircomp{y}{x}{c} belongs to the sequence $Q$, because indifference of
objects on a criterion cannot be derived by transitive closure.
If \domC{y}{x}{c}$\in$$R^+(Q)$, then \domC{y}{x}{c}$\in$$R(Q)$ or
$\exists\:\obj{w}_1, \ldots, \obj{w}_m$$\in$$O$ such that \obj{y}$\succ_c$\obj{w}$_1$$\in$$R(Q),
\ldots$, $\obj{w}_i$$\succ_c$$\obj{w}_{i+1}$$\in$$R(Q), \ldots$, \obj{w}$_m$$\succ_c$\obj{x}$\in$$R(Q)$.
Either way, at least one question involving \obj{x} on each criterion $c$ is required.
Thus, it takes at least $|C|$ questions involving \obj{x} to determine \domO{y}{x}.\vspace{1mm}
\end{proof}

\begin{table}
\begin{scriptsize}
\begin{center}
\begin{tabular}{|@{}c@{}|@{}c@{}|@{}c@{}|@{}c@{}|@{}c@{}|@{}c@{}|}
\hline
 $i$ & $rlt(q_i)$ & Derived Results & $O_\surd$ & $O_?$ & $O_\times$ \\ \hline
 $1$&\domC{c}{\underline{f}}{s}&&$\emptyset$&\{\obj{a,b,c,d,e,f}\} &$\emptyset$\\ \hline
 $2$&\domC{\underline{f}}{c}{m}&\indiffO{f}{c}& & & \\ \hline
 $3-4$ & \domC{\underline{a}}{e}{s}, \domC{\underline{a}}{e}{m} & & & & \\ \hline
 $5$ & \domC{e}{\underline{a}}{a} & \indiffO{a}{e} & & & \\ \hline
 $6-7$ & \domC{c}{\underline{e}}{s}, \domC{c}{\underline{e}}{m} & & & & \\ \hline
 $8$ & \indiffC{\underline{e}}{c}{a} & \domO{c}{e} & $\emptyset$ & \{\obj{a,b,c,d,f}\} & \{\obj{e}\}\\ \hline
 $9$&\domC{\underline{b}}{a}{s}&\domC{b}{e}{s} & & & \\ \hline
 $10$&\domC{a}{\underline{b}}{m}&\indiffO{a}{b} & & & \\ \hline
 $11$&\domC{\underline{d}}{f}{s}& & & & \\ \hline
 $12$&\domC{f}{\underline{d}}{m}&\indiffO{f}{d} & & & \\ \hline
 $13$&\domC{d}{\underline{a}}{s}&\domC{d}{e}{s} & & & \\ \hline
 $14$&\domC{\underline{a}}{d}{m}&\indiffO{a}{d} & & & \\ \hline
 $15-16$ & \indiffC{\underline{b}}{c}{s}, \indiffC{\underline{b}}{c}{m} & & & & \\ \hline
 $17$ & \domC{\underline{b}}{c}{a} & \domO{b}{c} & $\emptyset$ & \{\obj{a,b,d,f}\}&\{\obj{c,e}\}\\ \hline
 $18-19$ & \indiffC{\underline{d}}{b}{s}, \indiffC{\underline{d}}{b}{m} & & & & \\ \hline
 $20$ & \domC{b}{\underline{d}}{a} & \domO{b}{d} & $\emptyset$ & \{\obj{a,b,f}\} & \{\obj{c,d,e}\}\\ \hline
 $21$&\domC{\underline{a}}{f}{s}&\domC{b}{f}{s} & & & \\ \hline
 $22$&\indiffC{\underline{a}}{f}{m} & & & & \\ \hline
 $23$&\domC{f}{\underline{a}}{a}&\indiffO{a}{f} & & & \\ \hline
 $24$&\indiffC{b}{\underline{f}}{m} & & & & \\ \hline
 $25$&\domC{b}{\underline{f}}{a} & \domC{b}{a}{a}, \domO{b}{f} & \{\obj{b}\} & \{\obj{a}\} & \{\obj{c,d,e,f}\}\\ \hline
 $26-27$&\domC{c}{\underline{a}}{s}, \indiffC{\underline{a}}{c}{m} & & & & \\ \hline
 $28$&\domC{c}{\underline{a}}{a}&\domO{c}{a} & \{\obj{b}\} & $\emptyset$ & \{\obj{a,c,d,e,f}\}\\ \hline
\end{tabular}
\end{center}
\end{scriptsize}
\vspace{-3mm}
\caption{\small{\algname{RandomP} on Example~\ref{ex:movie}.}} 
\label{table:randomP}
\end{table}

Table~\ref{table:randomP} illustrates an execution of \algname{RandomP}
Example~\ref{ex:movie}.  The initial two questions are between \obj{c} and \obj{f}.
Afterwards, it is concluded that \indiffO{c}{f} by Property~\ref{property:nondominance}.
Therefore, \algname{RandomP} moves on to ask $3$ questions between \obj{a}
and \obj{e}.  In total, the execution requires $28$ questions.  Although it is
shorter than Table~\ref{table:randomQ} by only $2$ questions due to the small
size of the example, it clearly moves objects into $O_\times$ more quickly.
(In Table~\ref{table:randomQ}, $O_\times$ is empty until the $20$th question.
In Table~\ref{table:randomP}, $O_\times$ already has $3$ objects after $20$ questions.)
The experiment results in Sec.\ref{sec:exp} exhibit significant performance
gain of \algname{RandomP} over \algname{RandomQ} on larger data.

\subsection{Pair Having the Fewest Remaining Questions (\algname{FRQ})}\vspace{-1mm}
Similar to \algname{RandomP}, once a pair of objects \obj{x} and \obj{y}
are chosen, \algname{FRQ} keeps asking questions between \obj{x} and
\obj{y} until there is no such candidate questions.  Different from
\algname{RandomP}, instead of randomly picking a pair of objects,
\algname{FRQ} always chooses a pair with the fewest remaining questions.
There may be multiple such pairs.  To break ties, \algname{FRQ} chooses
such a pair that \obj{x} has dominated the fewest other objects and
\obj{y} has dominated the most other objects.  Furthermore, in comparing
\obj{x} and \obj{y}, \algname{FRQ} orders their remaining questions
(and thus criteria) by how likely \obj{x} is worse than \obj{y} on the
criteria.  Below we explain this strategy in more detail.

\vspace{-1mm}
{\flushleft \textbf{Selecting Object Pair}}\hspace{2mm}
Consider a question sequence $Q_i$ so far and \algname{FRQ} is to select
the next question $Q_{i+1}$.  We use $C_{\obj{x,y}}$ to denote the set of criteria $c$ such that
\compxyc\ is a candidate question, i.e., $C_{\obj{x,y}}$$=$$\{c$$\in$$C\; |\;
\compxyc$$\in$$Q^1_{can}\}$.  (We assume $Q^1_{can}$ is not empty.
Otherwise, \algname{FRQ} chooses from $Q^2_{can}$ in the same way;
cf. Alg.\ref{alg:qselect}.)  By Definition~\ref{def:candidate}, the
outcomes of these questions are unknown, i.e., $\forall$$c$$\in$$C_{\obj{x,y}}:rlt(\compxyc)$$\notin$$ R^+(Q_i)$.  Furthermore, if any
remaining question (whose outcome is unknown) between \obj{x} and \obj{y}
is a candidate question, then all remaining questions between them are
candidate questions.  \algname{FRQ} chooses a pair with the fewest
remaining candidate questions, i.e., a pair belonging to
$S_1$$=$$\argmin_{(\obj{x,y})} |C_{\obj{x,y}}|$.

The reason to choose such a pair is intuitive.  It requires at least
$|C_{\obj{x,y}}|$ candidate questions to determine \domO{y}{x}.  (The
proof would be similar to that of Property~\ref{property:minquestion}.)
Therefore, $\min_{(\obj{x,y})} |C_{\obj{x,y}}|$ is the minimum number of
candidate questions to further ask, in order to determine that an object
is dominated, i.e., non-Pareto optimal.  Thus, a pair in
$S_1$ may lead to a dominated object by the fewest questions, matching our
goal of identifying non-Pareto optimal objects as soon as possible.

We can further justify this strategy in a probabilistic sense.  For
\domO{y}{x} to be realized, it is necessary that none of the remaining
questions has an outcome \domC{x}{y}{c}, i.e., $\forall c$$\in$$C_{\obj{x,y}} :
rlt(\compxyc)$ $\neq$ \domC{x}{y}{c}.  Make the simplistic
assumption that every question \compxyc\ has an equal probability $p$ of
not having outcome \domC{x}{y}{c}, i.e., $\forall \compxyc$$\in$$Q^1_{can}$,
$P(rlt(\compxyc)$$\neq$\domC{x}{y}{c})$=$$p$.  Further assuming independence of
question outcomes, the probability of satisfying the aforementioned necessary condition is
$p^{|C_{\obj{x,y}}|}$.  By taking a pair belonging to
$S_1$, we have the largest probability
of finding a dominated object.  We note that, for \domO{y}{x} to be
realized, in addition to the above necessary condition, another condition
must be satisfied---if $\nexists c$ such that \domC{y}{x}{c}
$\in R^+(Q_i)$, the outcome of at least one remaining question should be
\domC{y}{x}{c}, i.e., $\exists c$$\in$$C_{\obj{x,y}} : rlt(\compxyc)$$=$\domC{y}{x}{c}.
Our informal probability-based analysis does not consider this extra
requirement.

\vspace{-1mm}
{\flushleft \textbf{Breaking Ties}}\hspace{2mm}
There can be multiple object pairs with the fewest remaining questions,
i.e., $|S_1|$$>$$1$.
To break ties, \algname{FRQ} chooses such an \obj{x} that has dominated the
fewest other objects, since it is more likely to be dominated.  If there
are still ties, \algname{FRQ} further chooses such a \obj{y} that has
dominated the most other objects, since it is more
likely to dominate \obj{x}.  More formally, \algname{FRQ} chooses a pair
belonging to $S_2$$=$$\{(\obj{x,y})$$\in$$S_1 \; |\; \nexists (\obj{x',y'})$$\in$$S_1
\text{ such that } d(\obj{x'})$$>$$d(\obj{x}) \vee (d(\obj{x'})$$=$ $d(\obj{x})
\wedge d(\obj{y'})$$>$$d(\obj{y}))\}$, where the function $d(\cdot)$ returns
the number of objects so far dominated by an object, i.e., $\forall \obj{x},
d(\obj{x})=|\{\obj{y} | $\domO{x}{y}$\text { based on } R^+(Q_i)\}|$.
This heuristic follows the principle of detecting non-Pareto optimal objects
as early as possible.  Note that $S_2$ may still contain multiple object
pairs.  In such a case, \algname{FRQ} chooses an arbitrary pair.

\vspace{-1mm}
{\flushleft \textbf{Selecting Comparison Criterion}}\hspace{2mm}
Once a pair (\obj{x,y}) is chosen, \algname{FRQ} has to select a criterion for the next question.
\algname{FRQ} orders the remaining criteria $C_{\obj{x,y}}$ based on the heuristic that the sooner it
understands \domO{y}{x} will not happen, the lower cost it pays.
As discussed before, $|C_{\obj{x,y}}|$ questions are required in order to
conclude that \domO{y}{x}; on the other hand, only one question (if asked
first) can be enough for ruling it out.  Consider the case that \obj{x} is
better than \obj{y} by only one remaining criterion, i.e.,
$\exists c$$\in$$C_{\obj{x,y}} : rlt(\compxyc)$$=$\domC{x}{y}{c} and
$\forall c'$$\in$$C_{\obj{x,y}}$, $c'$$\neq$$c : rlt(\obj{x} ?_{c'} \obj{y})$$=$$\obj{x}$$\nsucc_{c'}$$ \obj{y}$.
If \algname{FRQ} asks \compxyc\ after all other
remaining questions, it takes $|C_{\obj{x,y}}|$ questions to understand
\obj{y} does not dominate \obj{x}; but if \compxyc\ is asked first, no more
questions are necessary, because there will be no more candidate questions
in the form of \compxyc.

Therefore, \algname{FRQ} orders the criteria $C_{\obj{x,y}}$ by a scoring
function that reflects the likelihood of \obj{x}'s superiority than \obj{y}
by the corresponding criteria.  More specifically, for each $c$$\in$$C_{\obj{x,y}}$, its score is
$r_c(\obj{x,y})$$=$$r_c(\obj{y})$$+$${r'}_c(\obj{y})$$-$${r''}_c(\obj{y})$$-$$(r_c(\obj{x})$$+$${r'}_c(\obj{x})$$-$${r''}_c(\obj{x}))$
where $r_c(\obj{y})$$=$$|\{$\obj{z} $|$ \domC{z}{y}{c}$\in$$R^+(Q_i)\}|$,
${r'}_c(\obj{y})$$=$$|\{$\obj{z} $|$ \indiffC{y}{z}{c}$\in$$R^+(Q_i)\}|$,
and ${r''}_c(\obj{y})$$=$$|\{$\obj{z} $|$ \domC{y}{z}{c}$\in$$R^+(Q_i)\}|$.
In this scoring function, $r_c(\obj{y})$ is the number of objects preferred over
\obj{y} by criterion $c$, ${r'}_c(\obj{y})$ is the number of objects equally good
(or bad) as \obj{y} by $c$, and ${r''}_c(\obj{y})$ is the number of objects to
which \obj{y} is preferred with regard to $c$.
\algname{FRQ} asks the remaining questions in decreasing order of the
corresponding criteria's scores.  This way, it may find such a question
that $rlt(\compxyc)$$=$\domC{x}{y}{c} earlier than later.

\begin{table}
\begin{scriptsize}
\begin{center}
\begin{tabular}{|@{}c@{}|@{}c@{}|@{}c@{}|@{}c@{}|@{}c@{}|@{}c@{}|@{}c@{}|}
 \hline
 $i$ & $rlt(q_i)$ & Derived Results &(\obj{x,y}), $C_{\obj{x,y}}$ & $O_\surd$ & $O_?$ & $O_\times$ \\ \hline
 & & & (\obj{a,b}), $\{s,m,a\}$ & $\emptyset$ & \{\obj{a,b,c,d,e,f}\}&$\emptyset$\\ \hline
 $1$&\domC{b}{\underline{a}}{s}& & (\obj{a,b}), $\{m,a\}$ & & & \\ \hline
 $2$&\domC{\underline{a}}{b}{m} & \indiffO{a}{b} & (\obj{a,c}), $\{s,a,m\}$ & & & \\ \hline
 $3$&\domC{c}{\underline{a}}{s} & & (\obj{a,c}), $\{a,m\}$ & & & \\ \hline
 $4$&\indiffC{c}{\underline{a}}{a} & & (\obj{a,c}), $\{m\}$ & & & \\ \hline
 $5$&\domC{c}{\underline{a}}{m}&\domO{c}{a} & (\obj{b,c}), $\{a,s,m\}$ & $\emptyset$&\{\obj{b,c,d,e,f}\}&\{\obj{a}\} \\ \hline
 $6$&\indiffC{\underline{b}}{c}{a} & & (\obj{b,c}), $\{s,m\}$ & & & \\ \hline
 $7$&\indiffC{\underline{b}}{c}{s} & & (\obj{b,c}), $\{m\}$ & & & \\ \hline
 $8$&\domC{\underline{b}}{c}{m} & \domC{b}{a}{m}, \domO{b}{c} & (\obj{d,b}), $\{a,s,m\}$ & $\emptyset$ & \{\obj{b,d,e,f}\} & \{\obj{a,c}\} \\ \hline
 $9$&\indiffC{b}{\underline{d}}{a} & & (\obj{d,b}), $\{s,m\}$ & & & \\ \hline
 $10$&\indiffC{b}{\underline{d}}{s} & & (\obj{d,b}), $\{m\}$ & & & \\ \hline
 $11$&\domC{b}{\underline{d}}{m} & \domO{b}{d} & (\obj{e,b}), $\{a,s,m\}$ & $\emptyset$&\{\obj{b,e,f}\}&\{\obj{a,c,d}\} \\ \hline
 $12$&\domC{b}{\underline{e}}{a} & & (\obj{e,b}), $\{s,m\}$ & & & \\ \hline
 $13$&\domC{b}{\underline{e}}{s} & & (\obj{e,b}), $\{m\}$ & & & \\ \hline
 $14$&\domC{b}{\underline{e}}{m} & \domC{a}{e}{m}, \domO{b}{e} & (\obj{f,b}), $\{a,s,m\}$ & $\emptyset$ & \{\obj{b,f}\} & \{\obj{a,c,d,e}\} \\ \hline
 $15$&\domC{b}{\underline{f}}{a} & & (\obj{f,b}), $\{s,m\}$ & & & \\ \hline
 $16$&\domC{b}{\underline{f}}{s} & & (\obj{f,b}), $\{m\}$ & & & \\ \hline
 $17$&\domC{b}{\underline{f}}{m}&\domO{b}{f} & & \{\obj{b}\} & $\emptyset$ & \{\obj{a,c,d,e,f}\} \\ \hline
\end{tabular}
\end{center}
\end{scriptsize}
\vspace{-3mm}
\caption{\small{\algname{FRQ} on Example~\ref{ex:movie}.}} 
\label{table:frq}
\end{table}

Table~\ref{table:frq} presents the framework's execution for
Example~\ref{ex:movie}, by applying the \algname{FRQ} policy.  In addition
to the same columns in Tables~\ref{table:randomQ} and~\ref{table:randomP},
Table~\ref{table:frq} also includes an extra column to show, at each
iteration, the chosen object pair for the next question
(\obj{x,y}) and the set of remaining comparison criteria between them
($C_{\obj{x,y}}$).  The criteria in $C_{\obj{x,y}}$ are ordered by the
aforementioned ranking function $r(\cdot)$.  At the beginning of the
execution, the object pair is arbitrarily chosen and the criteria are
arbitrarily ordered.  In the example, we assume \paircomp{a}{b}{s} is chosen as 
the first question.  After $q_2$, \algname{FRQ} can derive that \indiffO{a}{b}.
Hence, there is no more candidate question between them and \algname{FRQ}
chooses the next pair (\obj{a,c}).  Three questions are asked for comparing
them.  At the end of $q_5$, multiple object pairs have the fewest remaining
questions.  By breaking ties, (\obj{b,c}) is chosen as the next pair, since
only \obj{c} has dominated any object so far.  The remaining criteria
$C_{\obj{b,c}}$ are ordered as $\{a,s,m\}$, because 
$r_a(\obj{b,c})$$>$$r_s(\obj{b,c})$ and $r_a(\obj{b,c})$$>$$r_m(\obj{b,c})$. 
The execution sequence terminates after $17$ questions, much shorter than 
the $30$ and $28$ questions by \algname{RandomQ} and \algname{RandomP}, respectively. 

To conclude the discussion on micro-ordering, we derive
a lower bound on the number of questions required for finding all Pareto-optimal objects
(Theorem~\ref{th:lowerbound}).
The experiment results in Sec.\ref{sec:exp} reveal that \algname{FRQ} is nearly optimal
and the lower bound is practically tight, since the number of questions used by \algname{FRQ}
is very close to the lower bound.

\begin{theorem}\label{th:lowerbound}
Given objects $O$ and criteria $C$, to find all Pareto-optimal objects in $O$, at least $(|O|$$-$$k)$$\times$$|C|$$+$$(k$$-$$1)$$\times$$2$ pairwise comparison questions are necessary, where $k$ is the number of Pareto-optimal objects in $O$. \vspace{-1mm}
\end{theorem}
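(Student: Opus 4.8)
The plan is to fix an arbitrary terminal sequence $Q$ and to lower-bound $|Q|$ by exhibiting two \emph{disjoint} families of questions that $Q$ must contain: one family that certifies the $|O|-k$ non-Pareto objects, contributing $(|O|-k)\cdot|C|$ questions, and one that certifies the mutual non-dominance of the $k$ Pareto-optimal objects, contributing $2(k-1)$ questions. Throughout I write $O_\surd$ for the $k$ Pareto-optimal objects and $O_\times$ for the $|O|-k$ dominated ones at termination. The decomposition is clean precisely because the first family consists of questions that each involve at least one object of $O_\times$, whereas the second consists of questions both of whose endpoints lie in $O_\surd$; hence the two families are automatically disjoint and their sizes add.

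For the dominated objects I would invoke Property~\ref{property:minquestion}: for each $\obj{x}\in O_\times$ and each criterion $c\in C$ there must be at least one question involving \obj{x} on $c$ that participates in establishing that \obj{x} is dominated, and questions for distinct criteria are distinct. This yields $|C|$ ``witness slots'' per dominated object, i.e. $(|O|-k)\cdot|C|$ slots in total. The crucial step is to show that no single question fills two slots. A question $\paircomp{x}{y}{c}$ lies on one criterion, so it can only fill the criterion-$c$ slot of \obj{x} or of \obj{y}, and I would argue it fills at most one: if its outcome is \domC{x}{y}{c} it rules \obj{y} out as a dominator of \obj{x} on $c$ and so cannot witness the domination of \obj{x}, leaving only \obj{y}; symmetrically for \domC{y}{x}{c}; and if the outcome is \indiffC{x}{y}{c}, it can witness the domination of \obj{x} only if \domO{y}{x} and the domination of \obj{y} only if \domO{x}{y}, which cannot both hold by the asymmetry of object dominance. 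Indifference is essential here, since, unlike a better-than relation, it can never be produced by transitive closure and therefore forces a direct question. Consequently the $(|O|-k)\cdot|C|$ slots require $(|O|-k)\cdot|C|$ distinct questions.

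For the Pareto-optimal objects I would use that $O_\surd$ is an antichain: for every pair $\obj{p},\obj{q}\in O_\surd$ the sequence must certify both $\obj{p}\nsucc\obj{q}$ and $\obj{q}\nsucc\obj{p}$, which by Property~\ref{property:nondominance} means establishing a better-than relation in each direction (on possibly different criteria) or full indifference. Forming the undirected graph whose edges are the Pareto--Pareto questions, this graph must be connected, since two Pareto objects in different components could share no chain of direct comparisons and hence no derived relation, so their non-dominance could not be certified; connectivity already forces $k-1$ compared pairs. I would then recover the factor of two by an incremental argument: order the Pareto objects by the time their Pareto-optimality is first secured and show that each object after the first contributes at least two previously-unused Pareto--Pareto questions---one securing that it is not dominated by the already-confirmed set and one securing that it does not dominate that set---for a total of $2(k-1)$. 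Adding the two disjoint families gives $(|O|-k)\cdot|C|+2(k-1)$.

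I expect the main obstacle to be exactly this last count---pinning the factor of two in $2(k-1)$ rigorously, because better-than relations propagate through transitive closure and a single direct question can certify many derived relations, so one must argue carefully which certifications genuinely demand a fresh direct comparison between two Pareto objects rather than an inherited one. A secondary subtlety is the degenerate case in which the Pareto objects are indifferent on every criterion, so that mutual non-dominance is witnessed by a single indifference question rather than by two better-than questions; this case must be treated separately or excluded, and it is the reason the bound is to be understood for instances in which the Pareto objects are distinguished on more than one criterion.
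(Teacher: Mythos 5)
Your first family is sound and coincides with the paper's own argument: Property~\ref{property:minquestion} gives $|C|$ witness questions per dominated object, each involving that object directly on one criterion with the object on the losing or indifferent side, and your observation that a better-than outcome can only witness its loser while an indifference outcome cannot witness both endpoints (since \domO{x}{y} and \domO{y}{x} cannot both hold) is exactly how the paper rules out double counting. The gap is in your second family. Its connectivity premise is false: the mutual non-dominance of two Pareto-optimal objects \obj{p} and \obj{q} can be certified entirely by transitive chains through \emph{non-Pareto} objects (e.g.\ \domC{p}{w}{c} and \domC{w}{q}{c} with \obj{w} dominated yield \domC{p}{q}{c} with no direct Pareto--Pareto question), so the graph whose edges are Pareto--Pareto questions need not be connected and the $k-1$ spanning-tree edges you rely on need not exist. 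Restricting the second family to questions with both endpoints in $O_\surd$ is precisely what made your disjointness automatic, but it is also what breaks the count. On top of that, you concede you cannot pin down the factor of two in the incremental step, so the $2(k-1)$ term is not established.

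The paper's route for this part is different and avoids both difficulties: it fixes a single Pareto-optimal object \obj{x} and charges two questions to each of the other $k-1$ Pareto-optimal objects \obj{y}. Either (a) \obj{x} and \obj{y} are indifferent on every criterion, in which case all $|C|$ indifference questions between them must be asked directly---indifference is never produced by transitive closure---so this ``degenerate'' case that you proposed to exclude in fact contributes $|C|\geq 2$ questions and needs no special treatment; or (b) there are criteria $c_1\neq c_2$ with \domC{x}{y}{c_1} and \domC{y}{x}{c_2}, in which case the chain realizing \domC{x}{y}{c_1} must end in a question whose outcome is \domC{z}{y}{c_1} (so \obj{y} is on the losing side) and the chain realizing \domC{y}{x}{c_2} must begin with a question whose outcome is \domC{y}{z'}{c_2} (so \obj{y} is on the winning side). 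Both questions involve \obj{y} directly; they are distinct from each other, distinct across different choices of \obj{y} (a coincidence would force a cycle in some $\succ_c$), and separate from the first family because Pareto-optimal objects carry no witness slots there. This gives at least $2$ questions per pair and hence $2(k-1)$ in total, with no appeal to connectivity or to an incremental ordering of the Pareto-optimal objects.
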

\begin{proof}
Suppose the non-Pareto optimal objects are $O_1$ and the Pareto-optimal objects are $O_2$ ($O_1$$\cup$$O_2$$=$$O$ and $O_1$$\cap$$O_2$ $=$$\emptyset$).  
We first separately consider $n_1$ (the minimum number of questions involving objects in $O_1$) and $n_2$ (the minimum number of questions comparing objects within $O_2$ only).

(1) By Property~\ref{property:minquestion} (and its proof), for every non-Pareto optimal object \obj{x}$\in$$O_1$, at least $|C|$ questions involving \obj{x} are required.  There exists at least an object \obj{y} such that \domO{y}{x}. The required $|C|$ questions lead to outcome either \indiffC{y}{x}{c} or \domC{z}{x}{c} such that \obj{y}$\succ_c \ldots \succ_c$\domC{z}{x}{c} (\obj{z} can be \obj{y}) for each $c$$\in$$C$.  For different \obj{x}, the $|C|$ questions cannot overlap---for a question with outcome \domC{z}{x}{c}, the \obj{x} is different; for a question with outcome \indiffC{y}{x}{c}, the same question cannot be part of both the $|C|$ questions for \obj{x} and the $|C|$ questions for \obj{y} to detect both as non-Pareto optimal, because it is impossible that \domO{x}{y} and \domO{y}{x}.  Hence, $n_1$$=$$(|O|$$-$$k)$$\times$$|C|$.

(2) Given any Pareto-optimal object \obj{x}$\in$$O_2$, for any other \obj{y}$\in$$O_2$, either (a) \indiffC{x}{y}{c} for all criteria $c$$\in$$C$ or (b) there exist at least two criteria $c_1$ and $c_2$ such that \domC{x}{y}{c_1} and \domC{y}{x}{c_2}.  Among the $k$$-$$1$ other objects in $O_2$, suppose $k_a$ and $k_b$ of them belong to cases (a) and (b), respectively ($k_a$$+$$k_b$$=$$k$$-$$1$).
Under case (a), each of the $k_a$ objects requires $|C|$ questions.
Under case (b), there must be a question leading to outcome \domC{z}{y}{c_1}, where \obj{z}$=$\obj{x} or \obj{x}$\succ_{c_1} \ldots \succ_{c_1}$\domC{z}{y}{c_1}.  Similarly, there must be a question with outcome \domC{y}{z}{c_2} such that \obj{z}$=$\obj{x} or \domC{y}{z}{c_2}$\succ_{c_2} \ldots \succ_{c_2}$\obj{x}.  Therefore, each of the $k_b$ objects requires at least $2$ questions.
Clearly, such required questions for comparing \obj{x} with the $k$$-$$1$ other objects in $O_2$ are all distinct.  They are also all different from the questions involving non-Pareto optimal objects (case (1)). Hence, $n_2$$=$$k_a$$\times$$|C|$$+$$k_b$$\times$$2$ $\geq$ $(k$$-$$1)$$\times$$2$.

Summing up $n_1$ and $n_2$, a lower bound on the number of required questions is thus $(|O|$$-$$k)$$\times$$|C|$$+$$(k-1)$$\times$$2$.
Note that, when $k$$=$$0$, a trivial, tighter lower bound is $|O|$$\times$$|C|$. (One example in which $k$$=$$0$ is Fig.\ref{fig:intransitivity}.)
\end{proof}

\vspace{2mm}
\section{Experiments and Case Studies}\label{sec:exp}

We designed and conducted experiments to compare the efficiency of different
instantiations of the general framework under varying problem sizes, by
simulations on a large dataset.  We also investigated two case studies by using
a set of human judges and a real crowdsourcing marketplace.

\subsection{Efficiency and Scalability Test by Simulation}\label{sec:exp-simulate}
In this experiment we studied the efficiency and scalability of various
instantiations of the general framework.  Given the large number of questions
required for such a study, we cannot afford using a real crowdsourcing
marketplace.  Hence, we performed the following simulation.
Each object is an NBA player in a particular year.
The objects are compared by $10$ criteria, i.e., performance categories such as
\emph{points}, \emph{rebounds}, \emph{assists}, etc.
We simulated the corresponding $10$ preference relations based
on the players' real performance in individual years, as follows.
Consider a performance category $c$ and two objects \obj{x}=(player1, year1)
and \obj{y}=(player2, year2).  Suppose \obj{x}.$c$$>$\obj{y}.$c$,  where
\obj{x}.$c$ is player1's per-game performance on category $c$ in year1 (similarly
for \obj{y}.$c$). We generated a uniform random number $v$ in [$0,1$].
If $v$$<$$1-e^{-(x.c-y.c)}$, we set \domC{x}{y}{c}, otherwise we set \indiffC{x}{y}{c}.
This way, we introduced a perturbation into the preference relations in order
to make sure they are partial orders, as opposed to directly using real
performance statistics (which would imply bucket orders).
Fig.\ref{fig:heatmap} shows that the number of Pareto-optimal objects
increases by the sizes of both object set $O$ (objects are randomly selected) and criteria set $C$
(the first $|C|$ criteria of the aforementioned $10$ criteria).

\begin{figure}[t]
    \centering
    \includegraphics[width=50mm]{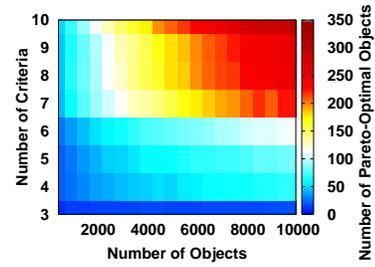}\vspace{-2mm}
    \caption{\small{Number of Pareto-optimal objects by $|O|$ and $|C|$.}}
    \label{fig:heatmap}
    \vspace{-1mm}
\end{figure}

\begin{figure}[t]
            \centering
        \begin{subfigure}[b]{0.5\textwidth}
                \includegraphics[width=\textwidth]{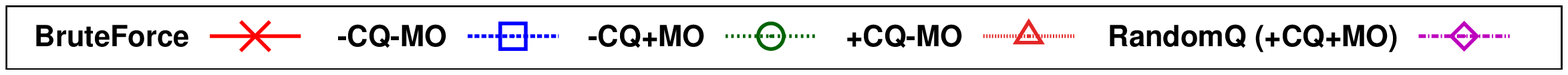}
        \end{subfigure}\vspace{-2mm}

        ~ 
        \hspace{-2mm}\begin{subfigure}[b]{0.25\textwidth}
                \includegraphics[width=\textwidth]{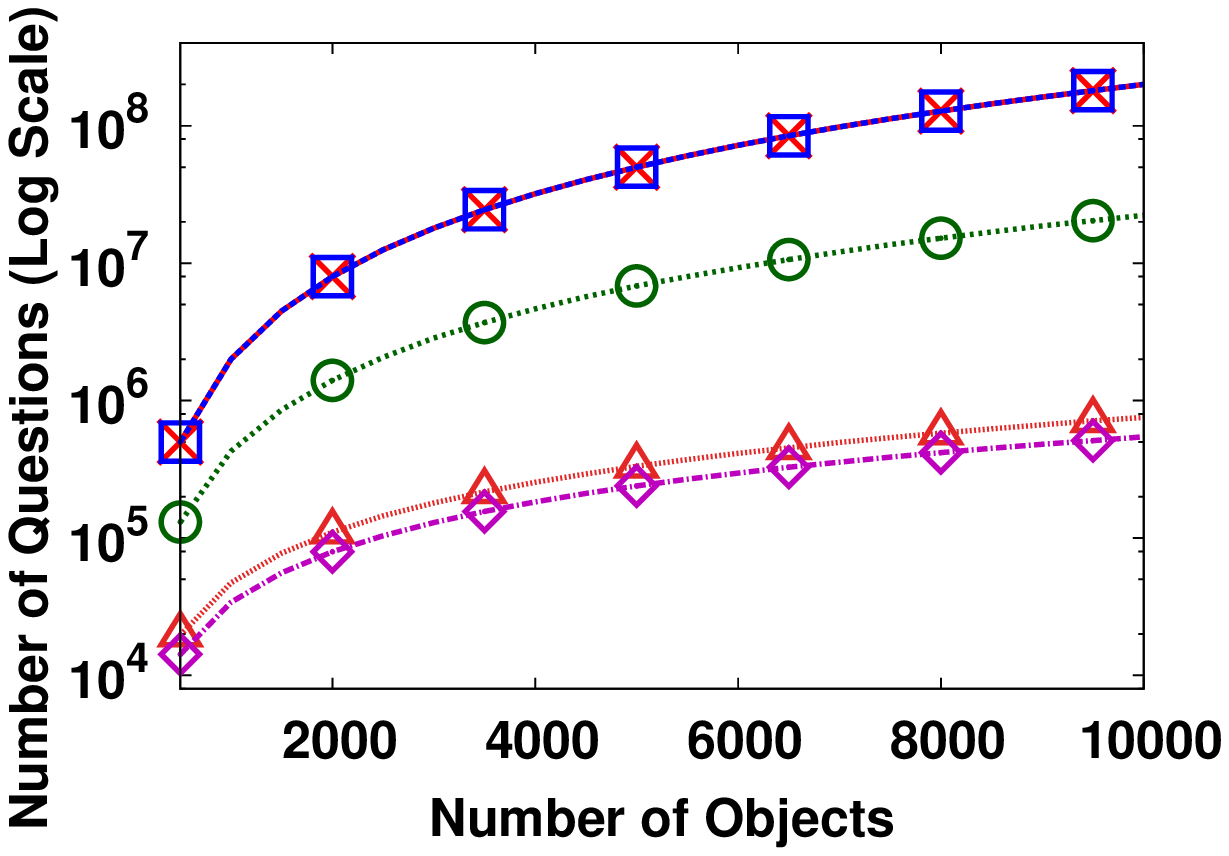}
                \caption{\small{$|C|=4$, varying $|O|$}}
                \label{fig:c4prime}
        \end{subfigure}%
        ~
        \hspace{-2mm}\begin{subfigure}[b]{0.25\textwidth}
                \includegraphics[width=\textwidth]{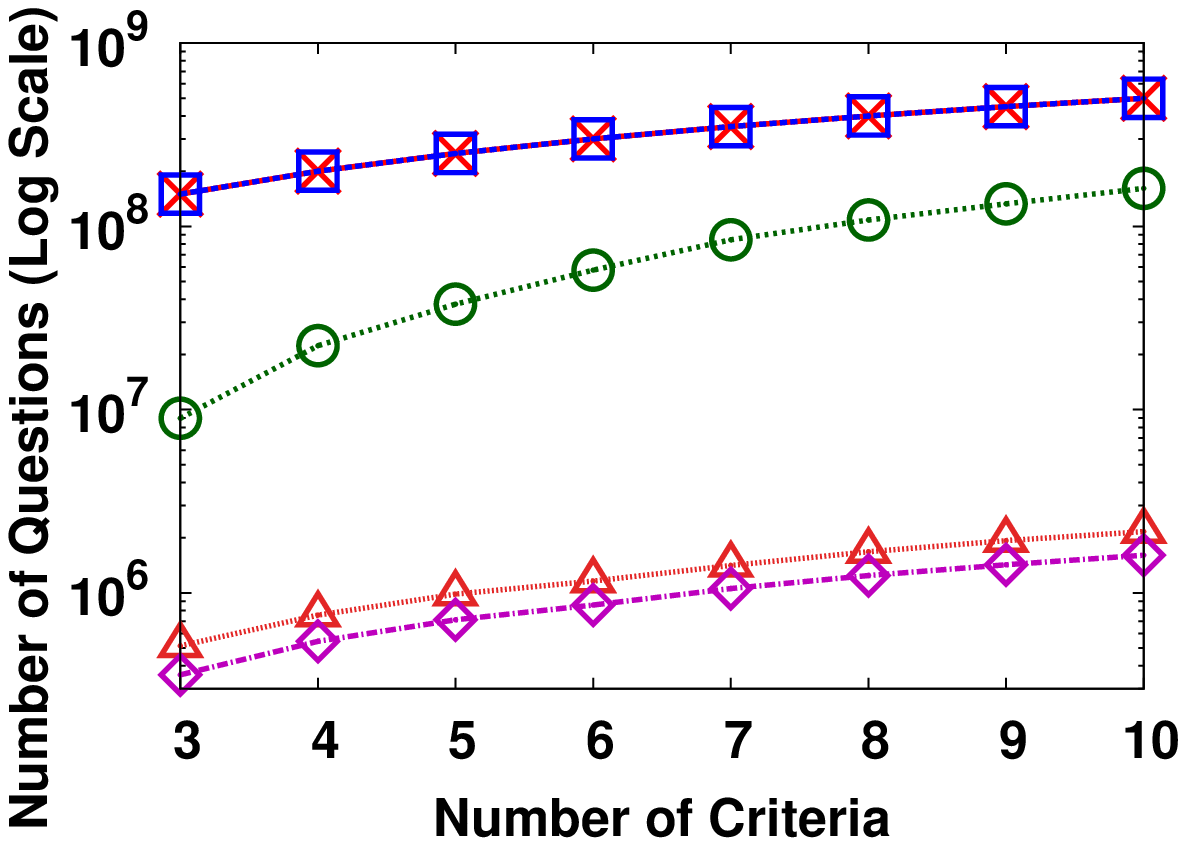}
                \caption{\small{$|O|=10,000$, varying $|C|$}}
                \label{fig:o10000prime}
        \end{subfigure}
    \vspace{-5mm}
    \caption{\small{Numbers of questions by \salgname{BruteForce} and four basic methods.}}
    \label{fig:NBA2}
\end{figure}

\begin{figure}[t]
            \centering
        \begin{subfigure}[b]{0.5\textwidth}
                \includegraphics[width=\textwidth]{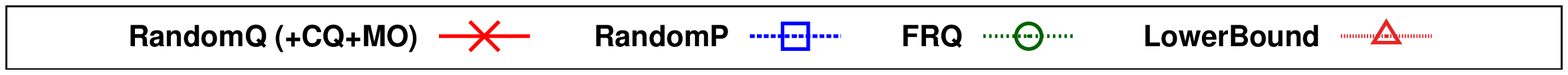}
        \end{subfigure}\vspace{-2mm}

        ~ 
        \begin{subfigure}[b]{0.25\textwidth}
                \includegraphics[width=\textwidth]{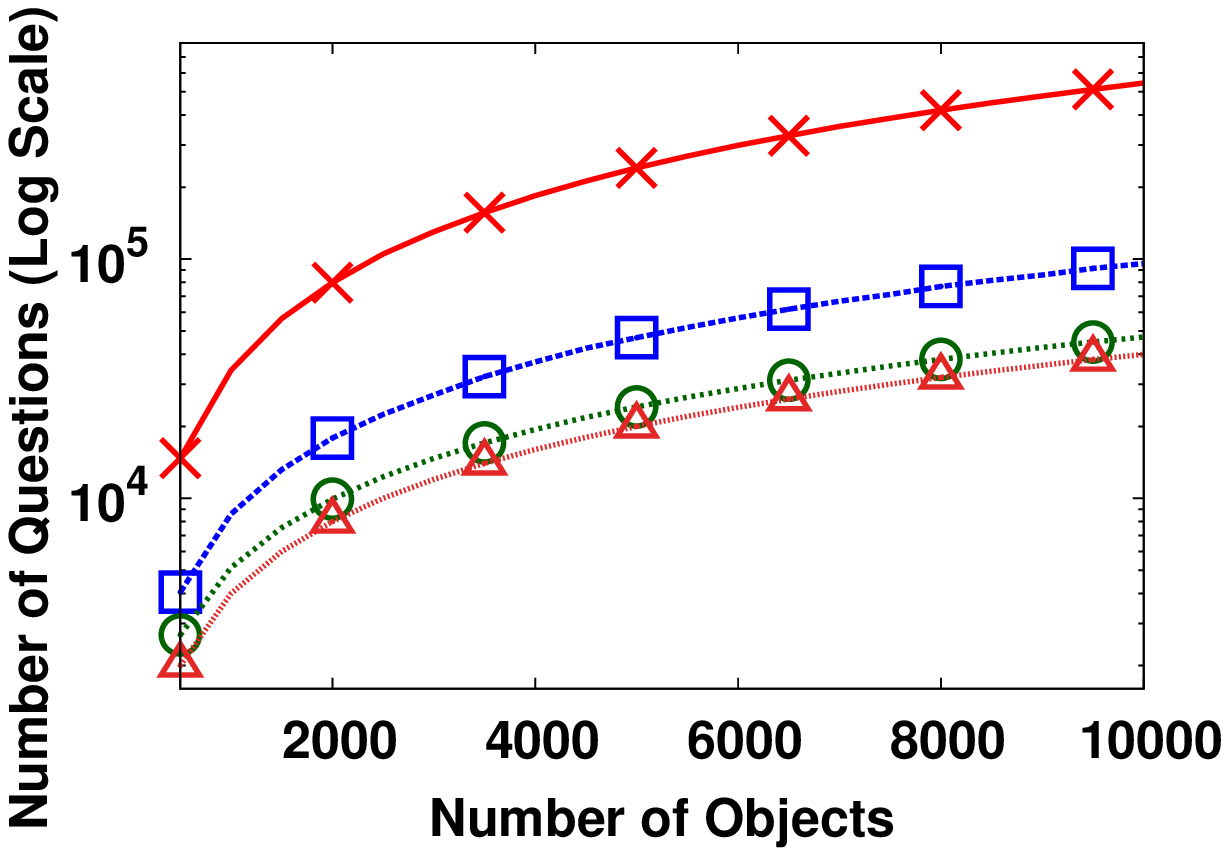}
                \caption{\small{$|C|=4$, varying $|O|$}}
                \label{fig:c4}
        \end{subfigure}%
        ~
        \hspace{-2mm}\begin{subfigure}[b]{0.25\textwidth}
                \includegraphics[width=\textwidth]{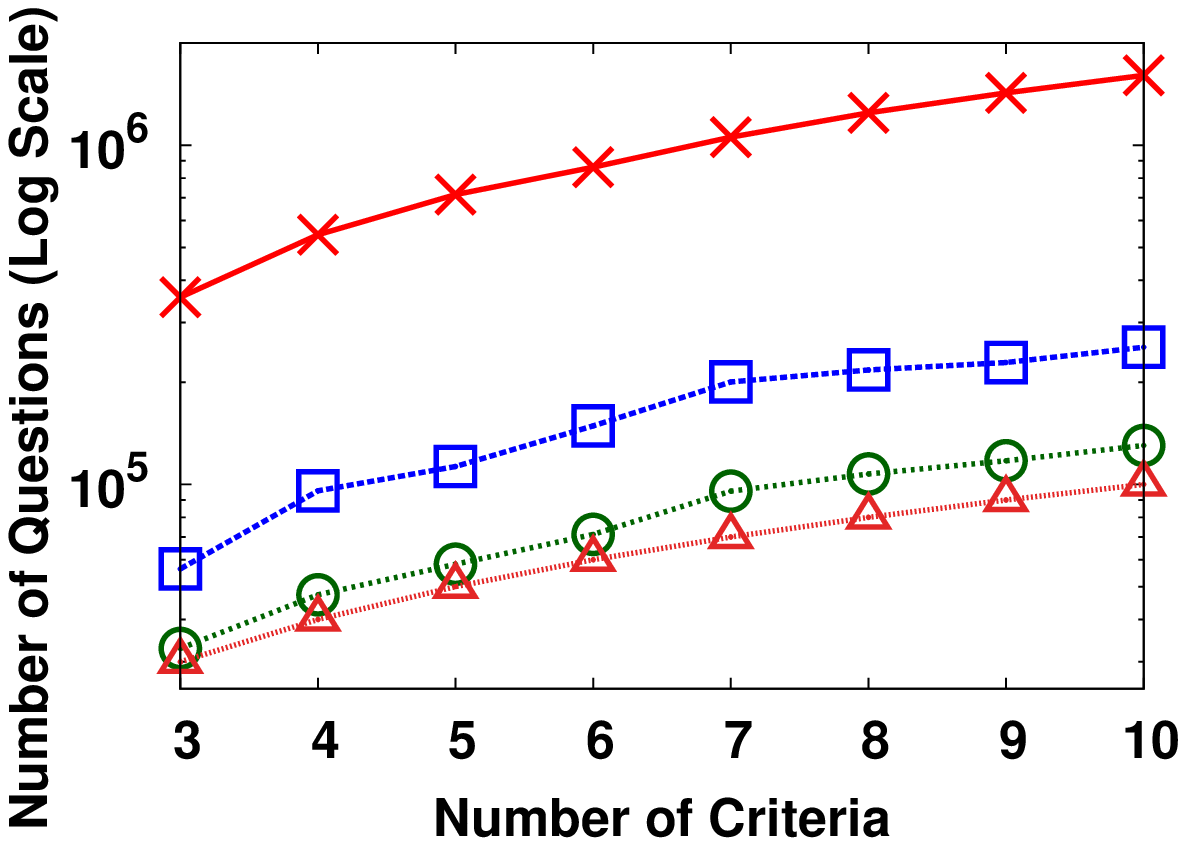}
                \caption{\small{$|O|=10,000$, varying $|C|$}}
                \label{fig:o10000}
        \end{subfigure}
    \vspace{-5mm}
    \caption{\small{Numbers of questions by different micro-ordering heuristics.}}
    \label{fig:NBA1}
\end{figure}

\subsubsection{Effectiveness of candidate questions and macro-ordering}\label{sec:CQMO}
To verify the effectiveness of candidate questions and macro-ordering,
we compared five methods---\algname{BruteForce}, \algname{--CQ--MO}, \algname{--CQ+MO}, \algname{+CQ--MO}, and \algname{+CQ+MO}.
The notation +/-- before \algname{CQ} and \algname{MO} indicates whether a method only selects candidate questions (\algname{CQ})
and whether it applies the macro-ordering strategy (\algname{MO}), respectively.  In all these five methods,
qualifying questions are randomly selected, i.e., no particular micro-ordering heuristics are applied.
For instance, \algname{+CQ+MO} selects only candidate questions and applies macro-ordering.  Hence, it is
equivalent to \algname{RandomQ}.
Fig.\ref{fig:NBA2} shows the numbers of required pairwise comparisons (in logarithmic scale) for each method,
varying by object set size ($|O|$ from $500$ to $10,000$ for $|C|$$=$$4$) and criterion set size ($|C|$ from $3$ to $10$ for $|O|$$=$$10,000$).
The figure clearly demonstrates the effectiveness of both \algname{CQ} and \algname{MO}, as taking out
either feature leads to significantly worse performance than \algname{RandomQ}.
Particularly, the gap between \algname{+CQ--MO} and \algname{--CQ+MO} suggests that choosing only candidate questions
has more fundamental impact than macro-ordering.
If neither is applied (i.e., \algname{--CQ--MO}), the performance is equally poor as that of \algname{BruteForce}.
(\algname{--CQ--MO} uses slightly less questions than \algname{BruteForce}, since it can terminate before exhausting all questions.
However, the difference is negligible for practical purpose, as their curves overlap under logarithmic scale.)

\subsubsection{Effectiveness of micro-ordering}
Fig.\ref{fig:NBA1} presents the numbers of pairwise comparisons required by different micro-ordering heuristics
(\algname{RandomQ}, i.e., \algname{+CQ+MO}, \algname{RandomP}, \algname{FRQ}) and \algname{LowerBound} (cf. Theorem~\ref{th:lowerbound})
under varying sizes of the object set ($|O|$ from $500$ to $10,000$ for $|C|$$=$$4$)
and the criteria set ($|C|$ from $3$ to $10$ for $|O|$$=$$10,000$).  In all these instantiations of the general framework, \algname{CQ} and \algname{MO} are applied.
The results are averaged across $30$ executions.  All these methods outperformed \algname{BruteForce} by orders of magnitude.
(\algname{BruteForce} is not shown in Fig.\ref{fig:NBA1} since it is off scale, but its number can be calculated by
equation $|C|$$\times$$|O|$$\times$$(|O|-1)/2$.)
For instance, for $5,000$ objects and $4$ criteria, the ratio of pairwise comparisons required by even the
naive \algname{RandomQ} to that used by \algname{BruteForce} is already as low as $0.0048$.
This clearly shows the effectiveness of \algname{CQ} and \algname{MO}, as discussed for Fig.\ref{fig:NBA2}.
The ratios for \algname{RandomP} and \algname{FRQ} are further several times smaller ($0.00094$ and $0.00048$, respectively).
The big gain by \algname{FRQ} justifies the strategy of choosing object pairs with the fewest
remaining questions.  Especially, \algname{FRQ} has nearly optimal performance, because it gets very close to \algname{LowerBound} in Fig.\ref{fig:NBA1}. 
The small gap between \algname{FRQ} and \algname{LowerBound} also indicates that the lower bound is practically tight. 
The figure further suggests excellent scalability of \algname{FRQ} as its number of questions grows linearly by both $|C|$ and $|O|$.


\subsection{Case Studies}

\begin{figure}[t]
    \centering
    \includegraphics[width=85mm]{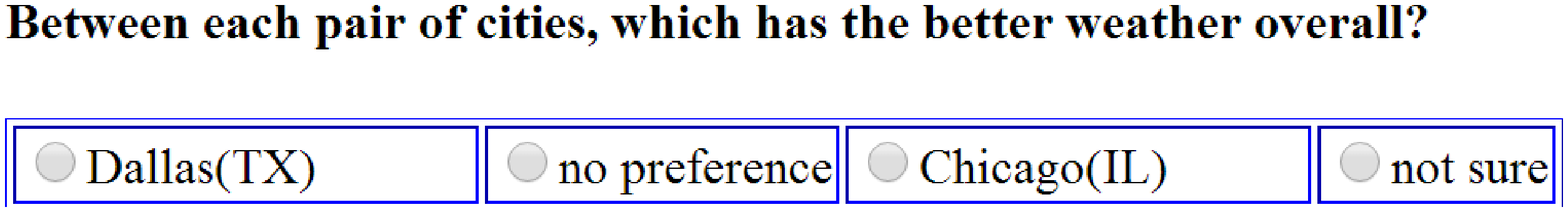}\vspace{-1mm}
    \caption{\small{A question in Case Study 1.}}
    \label{fig:city-form}
\end{figure}

\subsubsection{Case Study 1: collecting opinions from a small group of people (cf. Example~\ref{ex:movie})}
We asked a small number of students at our institution to compare $10$ U.S. cities by \emph{overall weather}, \emph{job opportunities}, and \emph{living expenses}.
The $135$ possible pairwise comparison questions were partitioned into $15$ question forms, each of which contains $9$ questions on a criterion.
Fig.\ref{fig:city-form} shows one such question.  We requested responses from $10$ people for each form.  Some people skipped various questions, allowed by
the ``not sure'' option in Fig.\ref{fig:city-form}.  Eventually we collected at least $6$ responses to each question.  The question outcomes were
derived by setting $\theta$$=$$60\%$ (cf. Equation~(\ref{eq:outcome})).

\begin{table}[t]
\begin{footnotesize}
\begin{center}
\begin{tabular}{|@{\hspace{1mm}}c@{\hspace{1mm}}||@{\hspace{1mm}}c@{\hspace{1mm}}|@{\hspace{1mm}}c@{\hspace{1mm}}|@{\hspace{1mm}}c@{\hspace{1mm}}|@{\hspace{1mm}}c@{\hspace{1mm}}|@{\hspace{1mm}}c@{\hspace{1mm}}|}
\hline
  & \salgname{LowerBound} & \salgname{FRQ} &\salgname{RandomP} & \salgname{RandomQ} & \salgname{BruteForce} \\ \hline
 Case Study 1 & 24 & 48 & 76 & 90 & 135\\ \hline
 Case Study 2 & 25 & 40 & 52 & 77 & 135\\ \hline
\end{tabular}
\end{center}
\end{footnotesize}
\vspace{-3mm}
\caption{\small{Numbers of questions by various methods in case studies.}}\vspace{-2mm}
\label{table:caseStudyResults}
\end{table}

Table~\ref{table:caseStudyResults} shows the numbers of questions taken by various methods in this case study, averaged over 30 executions.
While the performance gap between different methods is far less than in Sec.\ref{sec:exp-simulate},
we note that it is due to the very small object set.
In fact, based on the question outcomes, there are $4$ Pareto-optimal cities out of the $10$ cities, which makes it
less likely for cities to dominate each other and thus requires more questions.
Fig.\ref{fig:cityDom} shows the Hasse diagram for representing the dominance relation between the cities.

\begin{figure}[t]
    \centering
    \includegraphics[width=65mm]{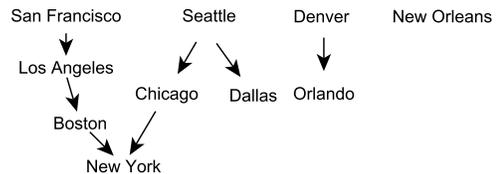}\vspace{-2mm}
    \caption{\small{Hasse diagram for the $10$ U.S. cities in Case Study 1.}}
    \label{fig:cityDom}
\end{figure}

\subsubsection{Case Study 2: information exploration using a crowdsourcing marketplace (cf. Example~\ref{ex:pic})}
This study was conducted in similar manner as Case Study 1.
We used a real crowdsourcing marketplace---Amazon Mechanical Turk---to compare $10$ photos of our institution
with regard to \emph{color}, \emph{sharpness} and \emph{landscape}.
The $135$ possible questions were also partitioned into $15$ tasks, each containing $9$ questions on a criterion.
We did not provide the ``skip'' option, because the questions require no special knowledge.
We did basic quality control by including in each task two validation questions that expect certain answers.
For instance, one such question asks the crowd to compare a colorful photo and a dull photo by criterion \emph{color}.
A crowdsourcer's input is discarded if their response to a validation question deviates from our expectation.
($5$ crowdsourcers failed on this.)
The parameters in Equation~(\ref{eq:outcome}) were set to be $k$$=$$10$ and $\theta$$=$$0.6$. (Thus each of the $15$ tasks was taken by
$10$ crowdsourcers that passed the validation.)
As Table~\ref{table:caseStudyResults} shows, the numbers of questions by various methods are similar to those in Case Study 1.
Based on the question outcomes, $3$ of the $10$ photos are Pareto-optimal.
Fig.\ref{fig:picDom} shows the Hasse diagram for the photos.

\begin{figure}[t]
    \centering
    \includegraphics[width=85mm]{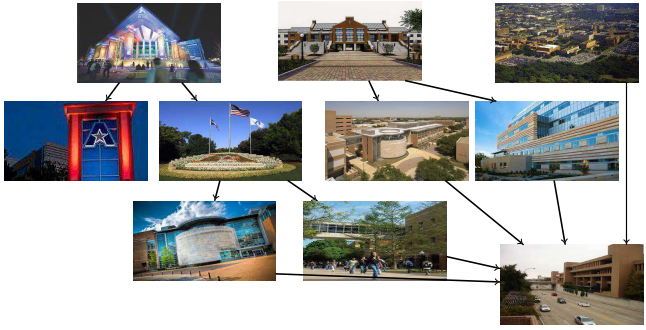}\vspace{-2mm}
    \caption{\small{Hasse diagram for the $10$ photos in Case Study 2.}}
    \label{fig:picDom}
\end{figure}

\section{Conclusions} 
This paper is the first study on how to use crowdsourcing to find Pareto-optimal objects when objects do not have explicit attributes and preference relations are strict partial orders.  The partial orders are obtained by pairwise comparison questions to the crowd.  It introduces an iterative question-selection framework that is instantiated into different methods by exploiting the ideas of candidate questions, macro-ordering and micro-ordering.  Experiment were conducted by simulations on large object sets and case studies were carried out using both human judges and a real crowdsourcing marketplace. The results exhibited not only orders of magnitude reductions in questions when compared with a brute-force approach, but also close-to-optimal performance from the most efficient method.

\small
\bibliographystyle{IEEEtran}
\bibliography{reference-crowd-pareto-opt}	
\end{document}